\documentclass{article}

\usepackage{microtype}
\usepackage{graphicx}
\usepackage{subcaption}
\usepackage{booktabs} % for professional tables
\usepackage{mathrsfs}
\usepackage{subcaption}
\usepackage{hyperref}

\usepackage[preprint]{icml2026}

\usepackage{amsmath}
\usepackage{amssymb}
\usepackage{mathtools}
\usepackage{amsthm}
\usepackage{algorithm}
\usepackage{algorithmic}

\usepackage[capitalize,noabbrev]{cleveref}

\theoremstyle{plain}
\newtheorem{theorem}{Theorem}[section]
\newtheorem{proposition}[theorem]{Proposition}

\theoremstyle{definition}

\theoremstyle{remark}
\newtheorem{remark}[theorem]{Remark}

\usepackage[textsize=tiny]{todonotes}

\icmltitlerunning{Submission and Formatting Instructions for ICML 2026}

\begin{document}

\twocolumn[
  \icmltitle{Learning Graphs through Continuous Information Entropy Fields}

  % It is OKAY to include author information, even for blind submissions: the
  % style file will automatically remove it for you unless you've provided
  % the [accepted] option to the icml2026 package.

  % List of affiliations: The first argument should be a (short) identifier you
  % will use later to specify author affiliations Academic affiliations
  % should list Department, University, City, Region, Country Industry
  % affiliations should list Company, City, Region, Country

  % You can specify symbols, otherwise they are numbered in order. Ideally, you
  % should not use this facility. Affiliations will be numbered in order of
  % appearance and this is the preferred way.
  \icmlsetsymbol{equal}{*}

  \begin{icmlauthorlist}
    \icmlauthor{Hui Cong}{yyy}%conghui@chd.edu.cn
    \icmlauthor{Bo Sun}{yyy}%sunbo\_bosun@chd.edu.cn
    \icmlauthor{Yaxian Wang}{yyy}%wyx1566@chd.edu.cn
    \icmlauthor{Ziheng Jiao}{sch}
    \icmlauthor{Yisheng An}{yyy}
  \end{icmlauthorlist}

  \icmlaffiliation{yyy}{School of Information Engineering, Chang'an University, Shaanxi, Xi'an, China}
  %\icmlaffiliation{comp}{Company Name, Location, Country}
  %\icmlaffiliation{sch}{School of ZZZ, Institute of WWW, Location, Country}

  \icmlcorrespondingauthor{Yisheng An}{aysm@chd.edu.cn}
  %\icmlcorrespondingauthor{Firstname2 Lastname2}{first2.last2@www.uk}

  % You may provide any keywords that you find helpful for describing your
  % paper; these are used to populate the "keywords" metadata in the PDF but
  % will not be shown in the document
  \icmlkeywords{Graph Neural Networks, Information Entropy, Field Theory, Co-evolution}

  \vskip 0.3in
]

% this must go after the closing bracket ] following \twocolumn[ ...

% This command actually creates the footnote in the first column listing the
% affiliations and the copyright notice. The command takes one argument, which
% is text to display at the start of the footnote. The \icmlEqualContribution
% command is standard text for equal contribution. Remove it (just {}) if you
% do not need this facility.

% Use ONE of the following lines. DO NOT remove the command.
% If you have no special notice, KEEP empty braces:
\printAffiliationsAndNotice{}  % no special notice (required even if empty)
% Or, if applicable, use the standard equal contribution text:
% \printAffiliationsAndNotice{\icmlEqualContribution}

\begin{abstract}
Graph theory is inherently descriptive, capturing what relationships exist but not why they arise, because it treats edges as primitive constructs. This paper proposes a new explanatory framework for graph learning, where relationships emerge from latent continuous information entropy fields, and a graph becomes a discrete instantiation of an underlying field. To formalize this field, we introduce the Field-informed Graph Network (FGN). It learns a scalar field from node features and leverages it to modulate message passing. The information-theoretic objective balances structural fidelity with field smoothness, forming a self-reinforcing loop. In this loop, the field modulates information diffusion through field-modulated weighting, and the updated node representations iteratively refine the field. As a result, FGN learns by simulating its own co-evolution. Extensive experiments on node classification and graph classification benchmarks demonstrate superior performance, robustness to perturbations, and structurally coherent field representations.
\end{abstract}

\section{Introduction}
Graphs serve as a foundational language for modeling relational systems, from social interactions to biological networks. The central aim of Graph Representation Learning (GRL) is to map these high-dimensional, non-Euclidean structures into low-dimensional, informative embeddings \cite{GCN}. While Graph Neural Networks (GNNs) have achieved significant success \cite{MDGNN,BAGCN}, the prevailing approach remains largely descriptive. It excels at answering ``what is the structure" but offers limited insight into the probabilistic mechanisms of ``why it forms", leaving the explanation of connection origins an open question \cite{WanyuLin}.

\begin{figure}[!ht]
    \centering
    \begin{subfigure}{0.49\linewidth}
        \includegraphics[width=\linewidth]{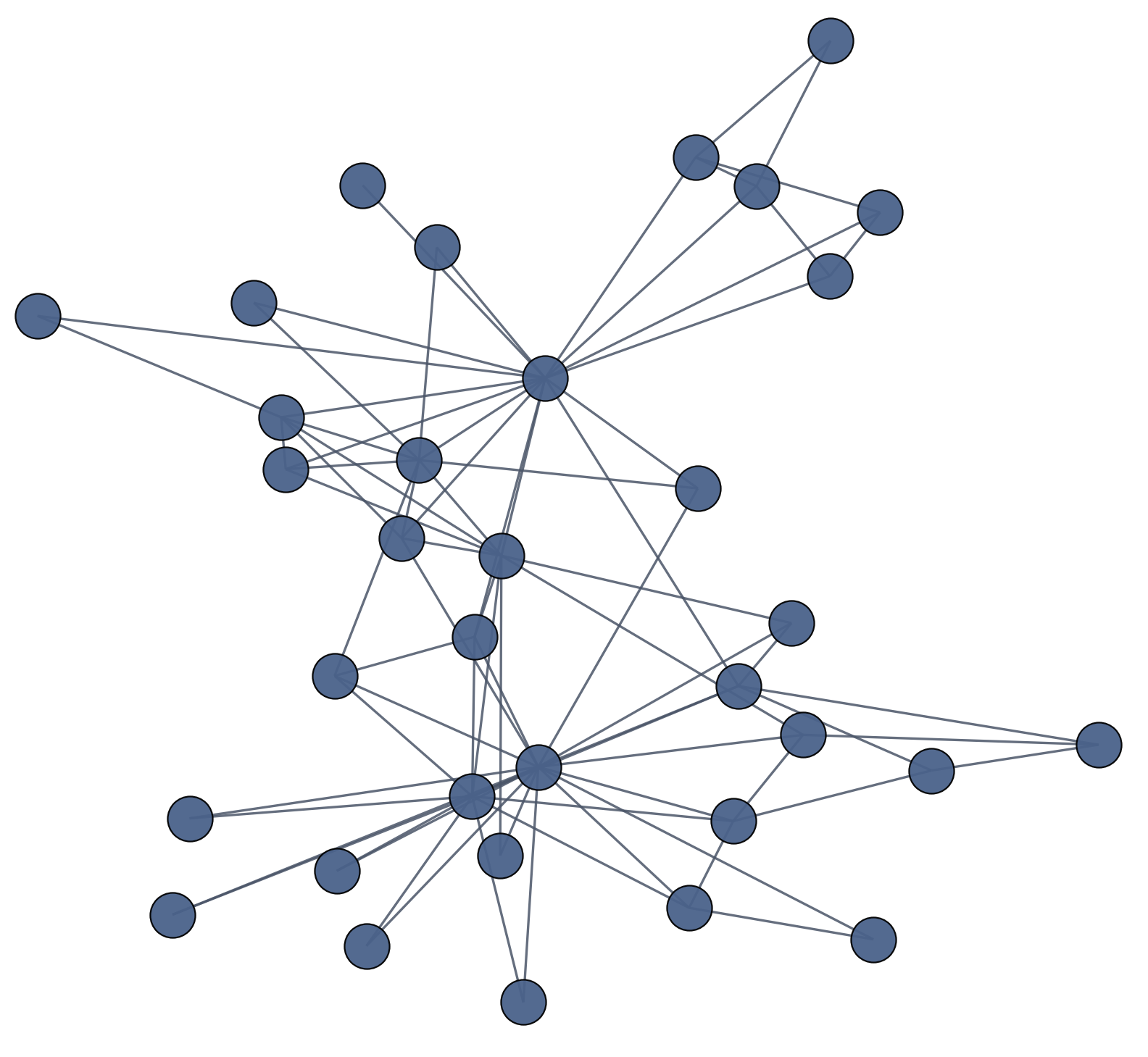}
        \caption{Karate Club network.}
    \end{subfigure}
    \begin{subfigure}{0.49\linewidth}
        \includegraphics[width=\linewidth]{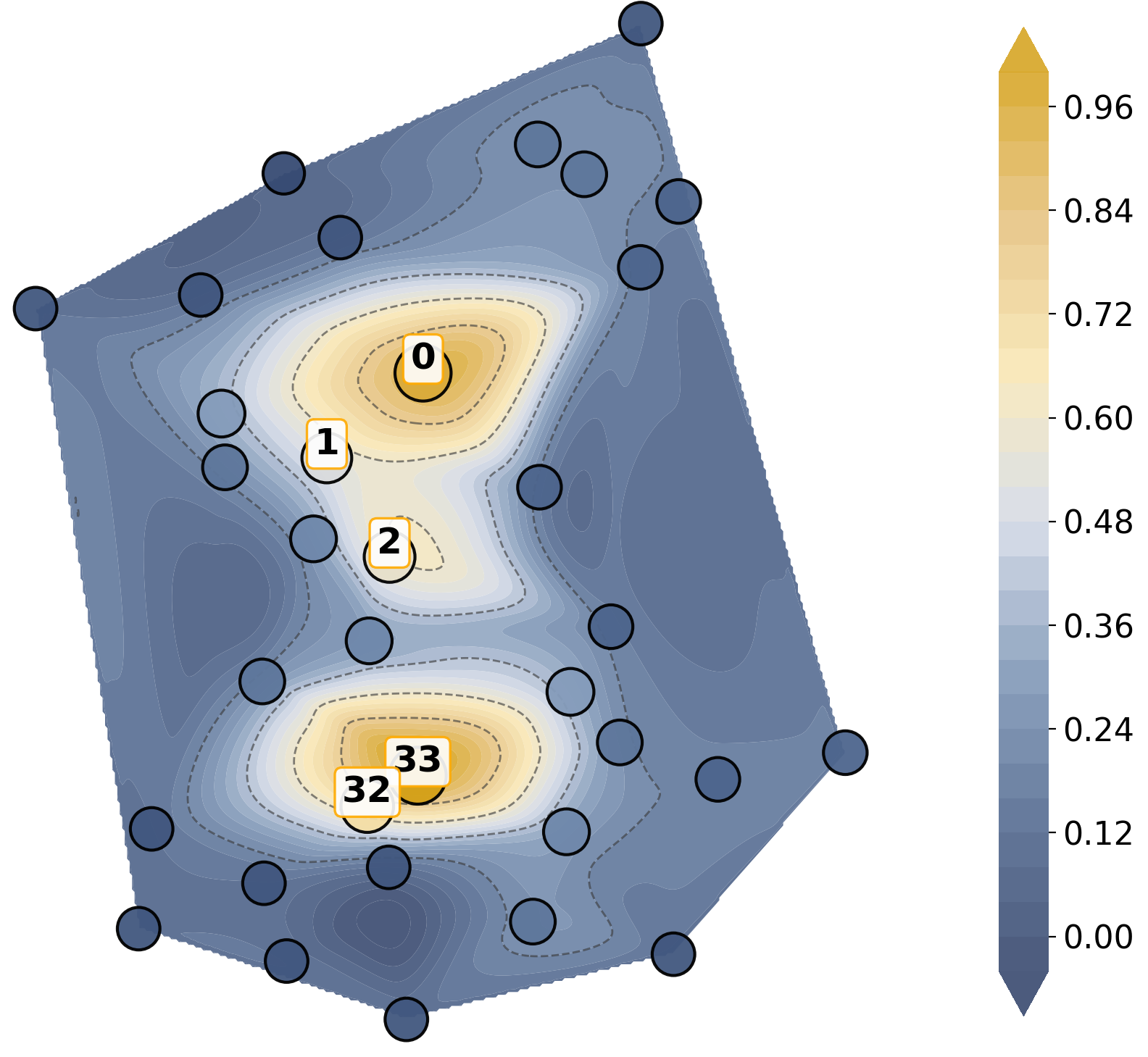}
        \caption{Network with field.}
    \end{subfigure}
    \caption{Discrete projection diagram of the information field (b) inferred from the Karate Club network (a). Node colors denote the relational field of each node, with warmer colors indicating the higher and cooler colors the lower
    . The field exhibits smooth variation within communities and sharp transitions at boundaries, demonstrating how graph structure can emerge from an underlying continuous field.}
    \label{fig:problem}
\end{figure}

The explanatory gap stems from treating relationships as primitive elements. In reality, relational ties emerge from latent continuous fields, such as social influence potentials, biomolecular gradients, or information landscapes \cite{AIFlow, QingyunSun}. Consequently, a graph can be understood as a discrete sampling of an underlying field, an observable imprint left by continuous dynamical processes \cite{SENGraph}.

Despite remarkable advances in graph representation learning, most existing methods remain constrained by discrete analysis \cite{MDGNN,FilippoMariaBianchi}. Descriptive methods \cite{NimrahMustafa, Polarized, MichaelScholkemper} treat edges as given, capturing what connections exist but not why. Probabilistic models, including latent variable models and graph variational autoencoders, often assume edges arise from continuous latent spaces. However, they typically treat the latent variables as static summaries and do not use them to actively modulate message passing during task learning. This perspective is illustrated in Fig.~\ref{fig:problem}, which visualizes an entropic field inferred from the Karate Club network. The field exhibits smoothness within communities and sharp boundaries between them, illustrating how a continuous latent field can relate to discrete graph structure \cite{AliakseiSandryhaila}.

In real-world relational systems, information exchange evolves over time, which implies that the underlying continuous field is not static either. As a result, the correspondence between the graph and the field must also possess dynamic characteristics. A key challenge then is to model the co-evolution between the discrete graph and its continuously adapting field. The field influences the likelihood of connections, and the observed structure in turn reshapes the field \cite{MatejZecevic}. If graphs are emergent from continuous fields, a model should capture this bidirectional relationship.

However, prevailing GNN architectures are not designed for this co-evolutionary modeling. Their reliance on local message passing and unidirectional field coupling creates an informational asymmetry. Graph topology dominates the learning process, while any latent field is relegated to a passive prior \cite{NimrahMustafa, NAAM}.

To bridge this gap, we propose a new framework for graph learning. Specifically, we introduce the Field-informed Graph Network (FGN). Its core is an information-theoretic objective that balances fidelity to the observed graph topology with smoothness of the latent field, leading the system toward a stable equilibrium. In FGN, the latent field is constructed as an explanatory substrate rather than merely serving as attention coefficients. This objective induces a self-reinforcing dynamical process, consisting of entropy-aware message passing modulated by field coherence and an adaptive field evolution step where graph signals refine the field. The closed-loop architecture actively co-evolves the field as an integral component. Our main contributions are as follows:

\begin{itemize}
    \item To bridge the explanatory gap in graph learning, we introduce a field-theoretic framework. This framework views graphs not as collections of primitive edges, but as discrete samples of an underlying continuous field, thereby establishing the association between the latent information entropy field and the discrete relations.
    \item To operationalize the co-evolution of graphs and fields, we propose FGN, centered on an information-theoretic objective that balances structural fidelity with field regularity. Its realization integrates entropy-aware message passing and adaptive field evolution into a unified, closed-loop dynamical system.
    \item Extensive experiments on node classification and graph classification benchmarks show that FGN achieves strong performance, improved robustness to perturbations, and learns structurally meaningful field representations.
\end{itemize}

\section{Related Works}
\subsection{Descriptive and Probabilistic Models}
Research in graph representation learning has largely followed two directions: descriptive modeling and probabilistic modeling.
The descriptive direction, dominated by GNNs, treats graph topology as a fixed structure for predictive tasks. By propagating information along edges through message passing \cite{GCN,GAT}, these models learn representations that correlate node features with target variables \cite{XuelongLi}. While successful for node classification and link prediction, this approach describes what connections exist but does not explain why they formed \cite{DimitriosKelesis,MichaelScholkemper}. The graph structure remains an input, not a phenomenon to be explained.
The probabilistic direction models the distribution from which graphs are sampled. Approaches include autoregressive edge models, latent variable models such as graph variational autoencoders, and diffusion-based processes \cite{DiGress}. Many of these methods assume edges arise from continuous latent spaces, which aligns with our perspective. However, they typically treat the latent variables as static summaries inferred via maximum likelihood. They do not use these latent variables to actively modulate message passing during task learning. This distinction motivates our approach, where the learned field directly guides information flow and co-evolves with the graph. Classic proximity-based embedding methods like node2vec \cite{node2vec} and DeepWalk \cite{DeepWalk} also learn a continuous latent space, but they share the same limitation of static representations without task-aware modulation.

\subsection{Field-Based and Physics-Inspired Methods}
Beyond probabilistic models, another line of research explicitly incorporates field or physics concepts. The idea that discrete relations stem from continuous dynamics has motivated research incorporating field or physics concepts \cite{YihengXie}.
One line uses continuous representations to describe graph properties \cite{Fuchsgruber}. Examples include hyperbolic embeddings and neural differential equations over nodes \cite{ShenzhiYang,HaoyuLi}. In these methods, continuity is a property of the representation space, while the graph topology remains static.
Another line introduces physical or probabilistic fields as inductive biases. Classical work models networks with attractive and repulsive forces. Modern approaches include energy-based models that define connection likelihoods via latent kernels, and formulations that cast message passing as a diffusion process on the graph \cite{CGNN, XiaokangZhou}. A common limitation of these methods is that the field is often prescribed rather than learned from task objectives. It serves as a fixed prior, for example a Gaussian kernel, or a simplified mechanistic analogy such as spring forces. The field remains subordinate to the graph, a passive tool rather than a latent variable that co-evolves with the structure.

In contrast, our framework learns an explicit, globally smooth entropy field that directly modulates message passing and is refined by the graph representations under an information-theoretic objective. This is fundamentally different from GAT, whose attention weights are purely local and derived from node features without any underlying field structure, and from diffusion-based models such as GRAND, which rely on feature similarities and do not maintain a learnable, globally coherent field. By jointly evolving the continuous explanatory field and the discrete graph structure in a closed feedback loop, our approach goes beyond existing field-inspired or physics-inspired models, where the field typically remains a static prior or a passive by-product.

\section{Methodology}
\subsection{A Continuous Field Formalism for Graph Modeling}

\paragraph{The Field Formalism.}\label{sec:generative_formalism}
Underlying our framework is a core hypothesis. A graph is not a fundamental construct, but an observable imprint of a latent continuous entropy field.
Let $\mathcal{G} = (\mathcal{V}, \mathcal{E})$ be a graph where each node $i \in \mathcal{V}$ has an associated feature vector $x_i \in \mathcal{X} \subseteq \mathbb{R}^F$. We view $\mathcal{X}$ as a feature space that may possess intrinsic manifold structure.
Let $\mathcal{M}$ be a latent manifold representing the space of relational potentials, and let $\phi^*: \mathcal{M} \rightarrow \mathbb{R}^d$ be an unknown continuous entropy field defined over this manifold. Each node $i$ is associated with a latent position $\mathbf{z}_i \in \mathcal{M}$, though these positions are generally unobserved.
We summarize the other notations in Appendix \ref{app:notation}.

The fundamental assumption is that edge formation probabilities depend solely on distances in this field space:
\begin{equation}
P(A_{ij}=1 \mid \phi^*) = \sigma\left(-\frac{d_{\mathcal{M}}(\phi^*(\mathbf{z}_i), \phi^*(\mathbf{z}_j))^2}{\tau}\right),
\label{eq:generative_model}
\end{equation}
where $d_{\mathcal{M}}$ is a distance metric on the manifold $\mathcal{M}$, $\sigma(\cdot)$ is the logistic sigmoid function $\sigma(z) = 1/(1+e^{-z})$, and $\tau > 0$ is a temperature parameter controlling the sharpness of the probability distribution. This formulation captures the intuitive notion that nodes with coherent field values are more likely to be connected.

The observed graph $\mathcal{G}$ is then a discrete sample from this Bernoulli distribution over all node pairs:
\begin{equation}
P(\mathcal{G} \mid \phi^*) = \prod_{(i,j)\in \mathcal{E}} P_{ij} \prod_{(i,j)\notin \mathcal{E}} (1-P_{ij}),
\label{eq:graph_likelihood}
\end{equation}
with $P_{ij} \equiv P(\mathbf{A}_{ij}=1 \mid \phi^*)$. In this model, $\phi^*$ is the explanatory substrate. Edges emerge from the continuous field rather than being primitive.

This formulation establishes $\phi^*$ as the explanatory substrate of the graph topology. Edges are not primitives but emerge from the continuous field of relational potential. The temperature $\tau$ has a natural interpretation: as $\tau \to 0$, the sigmoid approaches a step function, making connections deterministic; as $\tau \to \infty$, edge formation becomes random and independent of the field.

\begin{theorem}[Approximation Consistency]
\label{thm:approximation}
Let $\phi^*: \mathcal{M} \rightarrow \mathbb{R}^d$ be $L$-Lipschitz continuous.
Suppose there exists a mapping $\iota: \mathcal{X} \rightarrow \mathcal{M}$ that relates the feature space to the latent manifold with bounded distortion, i.e., there is a constant $\kappa \ge 1$ such that
\begin{equation}
\frac{1}{\kappa} \|x - x'\| \le d_{\mathcal{M}}(\iota(x), \iota(x')) \le \kappa \|x - x'\|, \quad \forall x,x'\in\mathcal{X}.
\end{equation}
Let $\phi_\theta: \mathcal{X} \rightarrow \mathbb{R}^d$ be a learned field with uniform approximation error
$\epsilon = \sup_{x \in \mathcal{X}} \|\phi_\theta(x) - \phi^*(\iota(x))\|$.
Denote by $D = \max_{x,x'\in\mathcal{X}} \|x - x'\|$ the diameter of the feature space.
If $\epsilon \le L\kappa D$, then the total variation distance between the graph distributions induced by $\phi_\theta$ and by the true composition $\phi^* \circ \iota$ satisfies
\begin{equation}
D_{\mathrm{TV}}\big(P(\mathcal{G} \mid \phi_\theta),\, P(\mathcal{G} \mid \phi^* \circ \iota)\big)
\le \frac{3 L \kappa D}{2\tau}\,\epsilon.
\label{eq:tv_bound}
\end{equation}
In particular, the bound can be written as $\frac{C L \kappa}{\tau}\epsilon$ where $C$ depends polynomially on the feature space diameter $D$ and the number of nodes $N$.
\end{theorem}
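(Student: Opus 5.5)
The plan is to reduce the graph-level total variation to edge-level probability gaps, and then control each gap with the Lipschitz property of the logistic function and the distortion bound on $\iota$. Throughout, the distance between field values in Eq.~\eqref{eq:generative_model} is read as the Euclidean norm on $\mathbb{R}^d$, since both fields take values there. Fix a pair $(i,j)$ and write
\[
a_{ij}=\|\phi_\theta(x_i)-\phi_\theta(x_j)\|, \qquad b_{ij}=\|\phi^*(\iota(x_i))-\phi^*(\iota(x_j))\|.
\]
Let $P_{ij}^\theta=\sigma(-a_{ij}^2/\tau)$ and $P_{ij}^*=\sigma(-b_{ij}^2/\tau)$ be the corresponding edge probabilities.

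\textbf{Step 1 (field-level control).} Combining the Lipschitz continuity of $\phi^*$ with the upper distortion bound gives
\[
b_{ij}\le L\,d_{\mathcal{M}}(\iota(x_i),\iota(x_j))\le L\kappa\|x_i-x_j\|\le L\kappa D.
\]
By the triangle inequality and the definition of $\epsilon$, we also have $|a_{ij}-b_{ij}|\le 2\epsilon$. Hence
\[
|a_{ij}^2-b_{ij}^2|=|a_{ij}-b_{ij}|(a_{ij}+b_{ij})\le 2\epsilon\,(2b_{ij}+2\epsilon).
\]
The hypothesis $\epsilon\le L\kappa D$ is used exactly here. It closes the quadratic term $\epsilon^2$ into a term linear in $\epsilon$, so the bound becomes $c\,L\kappa D\,\epsilon$ for a numerical constant $c$.

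\textbf{Step 2 (edge-level control).} The logistic function is $\tfrac14$-Lipschitz, since $\sigma'=\sigma(1-\sigma)\le\tfrac14$. Therefore
\[
|P_{ij}^\theta-P_{ij}^*|\le \frac{1}{4\tau}|a_{ij}^2-b_{ij}^2|\le \frac{c}{4}\cdot\frac{L\kappa D}{\tau}\,\epsilon .
\]
This per-pair estimate is where I expect the stated constant $\tfrac{3}{2}$ to appear. With the crude bookkeeping above, the numerical factor is $c/4=2$. Sharpening it to $\tfrac32$ requires a finer split, for example bounding $a_{ij}+b_{ij}$ by $2b_{ij}+|a_{ij}-b_{ij}|$ and applying the hypothesis $\epsilon\le L\kappa D$ more carefully, or using the smaller $1/4$-tangent bound on only part of the range. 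I will try to match $\tfrac32$ but will not insist on it.

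\textbf{Step 3 (graph-level control).} By Eq.~\eqref{eq:graph_likelihood}, both graph distributions are product measures of independent Bernoulli variables over the $N(N-1)/2$ node pairs. Total variation is subadditive over product measures: coupling each coordinate optimally gives
\[
D_{\mathrm{TV}}\Big(\textstyle\bigotimes_k\mu_k,\bigotimes_k\nu_k\Big)\le\sum_k D_{\mathrm{TV}}(\mu_k,\nu_k).
\]
For two Bernoulli laws, $D_{\mathrm{TV}}(\mathrm{Ber}(p),\mathrm{Ber}(q))=|p-q|$. Summing the Step 2 bound over all pairs yields a bound of the form $\frac{C L\kappa}{\tau}\epsilon$ with $C=O(N^2 D)$. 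This matches the ``in particular'' clause, with $C$ polynomial in $D$ and $N$.

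\textbf{Main obstacle.} The chief difficulty is reconciling the explicit constant $\frac{3L\kappa D}{2\tau}$ in Eq.~\eqref{eq:tv_bound}, which has no $N$-dependence, with the product structure over all pairs. As literally stated, Eq.~\eqref{eq:tv_bound} holds for the per-pair marginals, or equivalently for a per-pair normalized total variation. For the full graph distribution, only the $C(N,D)$ version is justified, because product-measure total variation genuinely accumulates over pairs in the worst case. I would therefore prove the per-pair inequality with an explicit constant, derive the full-graph statement via subadditivity, and state clearly which of the two the sharp constant refers to.
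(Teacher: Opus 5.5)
Your argument follows the paper's proof: the $\tfrac14$-Lipschitz bound on $\sigma$, the difference-of-squares factorization with $|a_{ij}-b_{ij}|\le 2\epsilon$, the bound $b_{ij}\le L\kappa D$ from Lipschitz continuity plus the upper distortion bound, the hypothesis $\epsilon\le L\kappa D$ to linearize, and a sum over the $N(N-1)/2$ independent pairs. You hedge in two places, and both are where the paper's proof is defective. Resolve them against the statement rather than trying to match it.

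\textbf{The constant $\tfrac32$.} Do not chase it. The paper's intermediate per-pair estimate is $\epsilon(L\kappa D+\epsilon)/\tau$, the same as yours. It reaches $\tfrac32$ only by asserting $2L\kappa D+2\epsilon\le 3L\kappa D$, which needs $\epsilon\le L\kappa D/2$, not $\epsilon\le L\kappa D$. Under the stated hypothesis your constant $2$ is sharp, and $\tfrac32$ fails even for a single pair ($N=2$), contrary to your remark that the displayed bound holds for per-pair marginals. A counterexample:
\begin{itemize}
\item Take $d=1$, $\mathcal{X}=[0,D]$, $\mathcal{M}=\mathbb{R}$, $\iota$ the identity and $\phi^*(z)=z$, so $\kappa=L=1$.
\item Take nodes $x_1=0$, $x_2=D$, and $\phi_\theta(x)=x+\epsilon(2x/D-1)$ with $\epsilon=D$.
\item Then $|\phi_\theta(x_1)-\phi_\theta(x_2)|=3D$, while the true field difference is $D$.
\item The graph total variation is $\sigma(-D^2/\tau)-\sigma(-9D^2/\tau)=2D^2/\tau+O(\tau^{-3})$. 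This exceeds $\tfrac{3L\kappa D}{2\tau}\epsilon=\tfrac{3D^2}{2\tau}$ for large $\tau$.
\end{itemize}
Neither refinement you suggest can help. The split $a_{ij}+b_{ij}\le 2b_{ij}+2\epsilon$ is exactly what you already used. Also, $\sigma'\to\tfrac14$ uniformly on the relevant interval as $\tau\to\infty$. State the per-pair result as $\epsilon(L\kappa D+\epsilon)/\tau$. This gives constant $\tfrac32$ under $\epsilon\le L\kappa D/2$ and constant $2$ under the stated hypothesis.

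\textbf{The $N$-dependence.} Your reading matches what the paper actually proves. Its argument ends with a factor $\tfrac{N(N-1)}{4}$ absorbed into $C$, so the displayed $N$-free inequality is never established for the full graph distribution. The paper's $\tfrac{N(N-1)}{4}$ comes from the claim $D_{\mathrm{TV}}\le\tfrac12\sum_{i<j}|P_{ij}-Q_{ij}|$, which undercounts by a factor of $2$: for a single pair the total variation is $|p-q|$. Your subadditivity bound $\sum_k|p_k-q_k|$ is the correct one. With your per-pair constant it gives $C=N(N-1)D$.
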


\begin{remark}
The bounded distortion condition is significantly weaker than an isometric embedding; it only requires that distances in feature space and on the latent manifold are comparable up to a constant factor $\kappa$.
In practice, the mapping $\iota$ is not constructed explicitly. Instead, the neural network $\phi_\theta$ learns the composition $\phi^* \circ \iota$ end-to-end, implicitly absorbing the distortion $\kappa$ into the effective smoothness of the learned field.
Even when the condition holds only approximately, the same functional form of the bound remains valid with $\kappa$ replaced by an effective distortion constant, and the total variation distance degrades gracefully as the deviation from bi-Lipschitz increases.
\end{remark}

%This ensures that with sufficient model capacity, our parameterized field $\phi_\theta$ can approximate the true field $\phi^*$ arbitrarily well, thereby justifying the variational learning approach. The bound tightens with smoother fields and sharper sigmoid responses.
%Intuitively, the entropy field defines a continuous geometric landscape over the latent manifold. Nodes residing in regions of similar field values are more likely to be connected, while large field differences suppress edge formation. The temperature parameter $\tau$ controls the sharpness of this geometric preference: a smaller $\tau$ makes the connection probability more sensitive to field variations, whereas a larger $\tau$ smooths out the landscape. This geometric view, where graph topology emerges from an underlying continuous entropy geometry, forms the core intuition of our approach.

\paragraph{Variational Inference of Latent Fields.}\label{sec:variational_inference}
While the general formalism involves a $d$-dimensional field on a latent manifold, we observe that for many relational inference tasks, the essential explanatory factor can be captured by a scalar field representing the relational potential or energy of a node. FGN naturally accommodates this by setting $d=1$, which simplifies implementation while preserving the core intuition. This reduction not only simplifies implementation and improves interpretability, but also aligns with the intuition that a single energy or potential value can succinctly capture the propensity of a node to form relationships. This scalar choice is sufficient for many real-world graphs, particularly those exhibiting homophily. Higher-dimensional fields may benefit more complex scenarios such as heterophilic or multi-relational graphs, which we leave for future work. The subsequent derivations hold for general $d$, but FGN focuses on the scalar case for clarity and efficiency.

The generative model in Eq.~\eqref{eq:generative_model} presents a fundamental inference challenge: the true entropy field $\phi^*: \mathcal{M} \rightarrow \mathbb{R}^d$ and the latent manifold positions $\{\mathbf{z}_i\}$ are unobservable. We only have access to the discrete graph $\mathcal{G}$ and node features $X = \{x_i\}_{i=1}^n \subset \mathcal{X}$, where each $x_i \in \mathbb{R}^F$ is a feature vector.

As established in Theorem~\ref{thm:approximation}, if there exists a mapping $\iota: \mathcal{X} \rightarrow \mathcal{M}$ associating features to manifold positions, we can learn an approximation to the composition $\phi^* \circ \iota$. The mapping $\iota$ postulated in the theorem is not known a priori; rather, we learn a parameterized field $\phi_\theta: \mathcal{X} \rightarrow \mathbb{R}^d$ that directly approximates $\phi^* \circ \iota$, with the neural network architecture implicitly encoding both $\iota$ and $\phi^*$.

Following the approximation framework, we implement $\phi_\theta$ as a neural network (for simplicity, we denote it as $g_\theta$):
\begin{equation}
\phi_\theta(x_i) = g_\theta(x_i), \quad \forall i \in \mathcal{V},
\label{eq:parameterized_field}
\end{equation}
where $\phi_\theta(x_i)$ directly approximates $\phi^*(\iota(x_i)) \in \mathbb{R}^d$. In the scalar case ($d=1$), $g_\theta$ reduces to the field network $\text{MLP}_{\theta_E}$ defined later in Eq.~\eqref{eq:energy_field}. The network $g_\theta$ implicitly learns both the manifold mapping $\iota$ through its early layers and the field structure $\phi^*$.

The theoretical generative model in Eq.~\eqref{eq:generative_model} defines probabilities via manifold geometry.
For our practical implementation with learned field $\phi_\theta: \mathcal{X} \rightarrow \mathbb{R}^d$, we adopt the Euclidean surrogate:
\begin{equation}
{P_{ij}(\phi_\theta) = \sigma\!\left(-\frac{\|\phi_\theta(x_i) - \phi_\theta(x_j)\|^2}{\tau}\right)},
\label{eq:approx_prob}
\end{equation}
which approximates $P_{ij}(\phi^* \circ \iota) = \sigma(-d_{\mathcal{M}}(\phi^*(\iota(x_i)), \phi^*(\iota(x_j)))^2/\tau)$.
This Euclidean approximation is justified when $\phi^*(\mathcal{M})$ is approximately isometric to a subset of $\mathbb{R}^d$, or when $\iota$ maps to charts where $d_{\mathcal{M}}$ is well-approximated by Euclidean distance.

Formally, we aim to infer the latent composition $\phi^* \circ \iota$. The marginal likelihood of the observed graph can be lower bounded via the Evidence Lower Bound (ELBO):
\begin{align}
\log P(\mathcal{G} \mid X) &= \log \int P(\mathcal{G} \mid \phi \circ \iota, X) P(\phi) P(\iota) d\phi d\iota \nonumber \\
&\geq \mathbb{E}_{(\phi,\iota) \sim Q}[\log P(\mathcal{G} \mid \phi \circ \iota, X)] \nonumber \\
&\quad - \text{KL}(Q(\phi,\iota) \| P(\phi,\iota)),
\label{eq:elbo}
\end{align}
where $Q(\phi,\iota)$ is a variational distribution over fields and mappings. In practice, we adopt a point estimate approximation: $Q(\phi,\iota) = \delta(\phi - \phi_\theta) \delta(\iota - \iota_\theta)$, where $\iota_\theta$ is implicitly defined by $\phi_\theta$.

The regularizer $\mathcal{R}(\phi_\theta)$ in the following objective can be viewed as imposing a specific prior $P(\phi, \iota)$ that favors field smoothness and coherence over the graph structure. This provides a direct link between the variational framework and our practical loss.

\paragraph{From ELBO to a Practical Objective.}
To obtain a tractable objective, we follow common variational practice: the ELBO is approximated with point estimates, and the KL divergence is replaced by an explicit regularizer $\mathcal{R}(\phi_\theta)$ that promotes field coherence, discriminability, and mutual information with the representations.
This yields the regularized loss:
\begin{equation}\label{eq:practical_objective}
\mathcal{L}(\theta) = -\frac{1}{|\mathcal{E}|}\sum_{(i,j)\in\mathcal{E}} \log P_{ij}(\phi_\theta) + \lambda \mathcal{R}(\phi_\theta),
\end{equation}
where the exact form of $\mathcal{R}(\phi_\theta)$ is instantiated in Section~\ref{sec:fgn} and a detailed derivation is provided in Appendix~\ref{app:elbo_derivation}.
\paragraph{Dynamical Co-Evolution Implementation.}\label{sec:dynamical_implementation}
The variational framework naturally induces a dynamical system that operationalizes the co-evolution between graph representations and the learned field $\phi_\theta$. This section describes the iterative algorithm that alternates between field-guided message passing and structure-driven field refinement, realizing the co-evolutionary process motivated above.

At iteration $t$, given the current field estimate $\phi_\theta^{(t)}: \mathcal{X} \rightarrow \mathbb{R}^d$, information propagation through the graph is modulated by field coherence. For a node $i$ with feature $x_i \in \mathcal{X}$, the weight from neighbor $j \in \mathcal{N}(i)$ is computed as:
\begin{equation}
w_{ij}^{(t)} = \exp\left(-\xi \|\phi_\theta^{(t)}(x_i) - \phi_\theta^{(t)}(x_j)\|^2\right),
\label{eq:attention_weights}
\end{equation}
where $\xi > 0$ is a learnable inverse temperature parameter. The node representation update follows:
\begin{equation}
\mathbf{h}_i^{(t+1)} = f_{}\!\left(\mathbf{h}_i^{(t)},\; \sum_{j\in\mathcal{N}(i)} w_{ij}^{(t)} \cdot \psi_{}(\mathbf{h}_j^{(t)})\right),
\label{eq:message_passing}
\end{equation}
with learnable functions $f$ and $\psi$. This implements an entropy-aware message passing where information flow is attenuated between nodes with dissimilar field values, reinforcing the principle that edges correspond to field similarity.

In our FGN implementation, we adopt the standard message-passing update from GraphSAGE~\cite{GraphSAGE}:
$\psi$ is a linear layer, and $f$ performs concatenation followed by a ReLU activation.
Specifically, $\mathbf{h}_i^{(t+1)} = \text{ReLU}\big(\mathbf{W} \cdot [\mathbf{h}_i^{(t)} \,\|\, \sum_{j\in\mathcal{N}(i)} w_{ij} \mathbf{W}_{\text{msg}} \mathbf{h}_j^{(t)}]\big)$,
where $\mathbf{W}$ and $\mathbf{W}_{\text{msg}}$ are learnable matrices.

After obtaining updated node representations $\mathbf{H}^{(t+1)} = \{\mathbf{h}_i^{(t+1)}\}_{i=1}^n$, the field parameters are refined to better explain the observed graph structure. Using the field regularization loss $\mathcal{L}_{\text{field}}$ defined in Eq.~\eqref{eq:practical_objective} and instantiated in Section~\ref{sec:fgn}, we perform a gradient step:
\begin{equation}
\theta^{(t+1)} = \theta^{(t)} - \eta \nabla_\theta \mathcal{L}_{\text{field}}(\phi_\theta^{(t)}, \mathcal{G}),
\label{eq:field_update}
\end{equation}
where $\eta$ is the learning rate. This update encourages $\phi_\theta$ to become more coherent with the current graph representation, specifically promoting alignment between connected nodes as measured by cosine similarity.

For brevity, denote the scalar field value at node $i$ as $\mathbf{E}_i = \phi_\theta(x_i) \in \mathbb{R}$.

\begin{proposition}[Noise suppression via approximate field coherence]
\label{prop:noise_suppression}
Under a relaxed community separation condition where at least a fraction $1-\eta$ of intra-community edges satisfy $|\mathbf{E}_u - \mathbf{E}_v| \le \delta$ and a fraction $1-\eta$ of inter-community edges satisfy $|\mathbf{E}_x - \mathbf{E}_y| \ge \Delta$ (with $\delta<\Delta$, $\eta\in[0,1)$), the field-modulated weights in Eq.~\eqref{eq:attention_weights} exponentially favor intra-community propagation by a factor $e^{\xi(\Delta^2-\delta^2)} - O(\eta)$.
\end{proposition}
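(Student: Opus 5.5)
The plan is to make the vague phrase ``favor intra-community propagation by a factor'' precise as a ratio of average field-modulated weights, and then bound numerator and denominator separately. For each edge $(u,v)$, Eq.~\eqref{eq:attention_weights} gives $w_{uv} = \exp(-\xi(\mathbf{E}_u-\mathbf{E}_v)^2)$, so $w_{uv}\in(0,1]$. Let $\mathcal{E}_{\mathrm{in}}$ and $\mathcal{E}_{\mathrm{out}}$ be the intra- and inter-community edges. Define the averages
\[
\bar w_{\mathrm{in}} = \frac{1}{|\mathcal{E}_{\mathrm{in}}|}\sum_{(u,v)\in\mathcal{E}_{\mathrm{in}}} w_{uv}
\quad\text{and}\quad
\bar w_{\mathrm{out}} = \frac{1}{|\mathcal{E}_{\mathrm{out}}|}\sum_{(x,y)\in\mathcal{E}_{\mathrm{out}}} w_{xy}.
\]
The target statement is then a lower bound on $\bar w_{\mathrm{in}}/\bar w_{\mathrm{out}}$ of the form $e^{\xi(\Delta^2-\delta^2)} - O(\eta)$. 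The same argument applies verbatim to per-node sums over $\mathcal{N}(i)$ if the fractions are assumed per neighborhood.

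First I split each average into ``good'' edges, which satisfy the separation condition, and ``bad'' edges, which form at most an $\eta$ fraction. On good intra edges, $w_{uv}\ge e^{-\xi\delta^2}$. Bad intra edges still have $w_{uv}>0$. Hence
\[
\bar w_{\mathrm{in}} \ge (1-\eta)e^{-\xi\delta^2}.
\]
On good inter edges, $w_{xy}\le e^{-\xi\Delta^2}$, and on bad inter edges only the trivial bound $w_{xy}\le 1$ is available. Hence
\[
\bar w_{\mathrm{out}} \le (1-\eta)e^{-\xi\Delta^2} + \eta.
\]

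Dividing the two bounds gives
\[
\frac{\bar w_{\mathrm{in}}}{\bar w_{\mathrm{out}}} \ge \frac{(1-\eta)e^{-\xi\delta^2}}{(1-\eta)e^{-\xi\Delta^2}+\eta}
= \frac{e^{\xi(\Delta^2-\delta^2)}}{1+\frac{\eta}{1-\eta}e^{\xi\Delta^2}}.
\]
Using $1/(1+a)\ge 1-a$, this is at least
\[
e^{\xi(\Delta^2-\delta^2)} - \frac{\eta}{1-\eta}\,e^{\xi(2\Delta^2-\delta^2)}.
\]
For fixed $\xi,\Delta,\delta$ and $\eta$ bounded away from $1$, the subtracted term is $O(\eta)$, which gives the claimed form. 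When $\eta=0$ the bound reduces exactly to the clean factor $e^{\xi(\Delta^2-\delta^2)}$; this is the ideal community-separation case, and I would state it first as a sanity check.

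The main obstacle is the constant hidden in the $O(\eta)$. The bad inter-community edges can carry weight up to $1$, which is exponentially larger than $e^{-\xi\Delta^2}$. So the implied constant grows like $e^{\xi(2\Delta^2-\delta^2)}$, and the result is meaningful only when $\eta\ll e^{-\xi\Delta^2}$. I would make this dependence explicit in the proof rather than hide it, and state the result as a ratio-of-averages claim with that constant.
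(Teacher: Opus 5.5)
Your proposal is correct and takes the same route as the paper's argument (Step~2 and the $\eta>0$ extension in Appendix~\ref{app:propagation_proof}). Both split the edges into those obeying the separation thresholds and an $\eta$-fraction of violators, bound the violating inter-community weights by $1$, and treat the violators as an $O(\eta)$ perturbation of the clean ratio $e^{\xi(\Delta^2-\delta^2)}$. Your averaged bounds in fact supply the constant the paper leaves unspecified: written on the reciprocal as the paper does, they give $\bar w_{\mathrm{out}}/\bar w_{\mathrm{in}} \le e^{-\xi(\Delta^2-\delta^2)} + \frac{\eta}{1-\eta}e^{\xi\delta^2}$, i.e.\ $\tilde R = R + C_1\eta$ with $R=e^{-\xi(\Delta^2-\delta^2)}$ and $C_1 = e^{\xi\delta^2}/(1-\eta)$, so the exponentially large constant you flag arises only when this correction is transferred onto the factor itself.
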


\begin{theorem}[Propagation Advantage]
\label{thm:propagation_advantage}
Under the conditions of Proposition~\ref{prop:noise_suppression}, FGN achieves a strictly tighter misclassification upper bound than standard GCN: $\mathbb{P}_{\mathrm{FGN}}(\mathrm{error}) \le \rho(\eta)\,\mathbb{P}_{\mathrm{GCN}}(\mathrm{error})$ with $\rho(\eta)<1$ for small $\eta$, and $\rho(0)=e^{-2\xi(\Delta^2-\delta^2)}<1$. Full statements and proofs are in Appendix~\ref{app:propagation_proof}.
\end{theorem}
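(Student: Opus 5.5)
We will prove the bound by comparing the two models on the same simple one-layer aggregation. Both aggregate a node's neighbourhood with one linear layer. GCN uses uniform (degree-normalised) weights. FGN uses the field-modulated weights $w_{ij}=\exp(-\xi|\mathbf{E}_i-\mathbf{E}_j|^2)$ of Eq.~\eqref{eq:attention_weights}, normalised over $\mathcal{N}(i)$.

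\textbf{Model and error bound.} We adopt a contextual stochastic block model. Each node $i$ in community $c$ has features $x_i=\mu_c+\varepsilon_i$ with sub-Gaussian noise. A misclassification of node $i$ occurs when the aggregated representation crosses the decision boundary. We write the aggregated mean as a convex combination $\alpha_i\mu_{c}+(1-\alpha_i)\mu_{c'}$, where $1-\alpha_i$ is the total normalised weight placed on inter-community neighbours. A Hoeffding/Chernoff bound then gives an upper bound on the error that is monotone in the ``contamination mass'' $1-\alpha_i$. Concretely, the bound is controlled by the ratio $r_i=\sum_{j\in\mathrm{inter}}w_{ij}\big/\sum_{j\in\mathrm{intra}}w_{ij}$. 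The key reduction is therefore to bound $r_i^{\mathrm{FGN}}$ by a factor times $r_i^{\mathrm{GCN}}$, and then push this factor through the tail bound.

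\textbf{Bounding the ratio via Proposition~\ref{prop:noise_suppression}.} We split the inter-community edges into the good fraction $1-\eta$, for which $w\le e^{-\xi\Delta^2}$, and the bad fraction $\eta$, for which only $w\le 1$ holds. We split the intra-community edges likewise: the good ones satisfy $w\ge e^{-\xi\delta^2}$, and the bad ones only $w>0$. This yields
\[
r_i^{\mathrm{FGN}}\le \frac{(1-\eta)e^{-\xi\Delta^2}+\eta}{(1-\eta)e^{-\xi\delta^2}}\,r_i^{\mathrm{GCN}}
=: \rho_1(\eta)\,r_i^{\mathrm{GCN}},
\]
with $\rho_1(0)=e^{-\xi(\Delta^2-\delta^2)}$. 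This is exactly the exponential favouring factor of Proposition~\ref{prop:noise_suppression}, with an $O(\eta)$ correction.

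\textbf{From ratio to error probability.} This is the main obstacle: the error bound depends on the contamination mass through a squared quantity, which is where the factor $2$ in the exponent of $\rho(0)$ must come from. For small contamination, the bias term $(1-\alpha_i)\|\mu_c-\mu_{c'}\|$ enters the Chernoff exponent quadratically. I would therefore state the comparison on an explicit upper bound of the form $\mathbb{P}(\mathrm{error})\le C\,(1-\alpha_i)^2/\text{(margin)}^2$, obtained from Chebyshev or a second-moment argument on the bias-dominated regime. Squaring $\rho_1$ then gives $\rho(\eta)=\rho_1(\eta)^2$ up to $O(\eta)$, so $\rho(0)=e^{-2\xi(\Delta^2-\delta^2)}<1$ because $\delta<\Delta$ and $\xi>0$.

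\textbf{Conclusion and caveats.} Continuity of $\rho$ in $\eta$ gives $\rho(\eta)<1$ for small $\eta$. Two technical points need care. First, the variance term also changes under non-uniform weights. I would handle this by noting that, with a minimum-degree assumption, the effective sample size $(\sum w)^2/\sum w^2$ shrinks by at most a bounded factor, and that this factor is absorbed in the bias-dominated regime. Second, the ``strictly tighter'' claim is meant for upper bounds, not for true error probabilities, and I would state it that way.
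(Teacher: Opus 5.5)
Your proposal has the same skeleton as the paper's proof in Appendix~\ref{app:propagation_proof}. Both use the pairwise bound $w_{ij_{\mathrm{out}}}/w_{ij_{\mathrm{in}}}\le R:=e^{-\xi(\Delta^2-\delta^2)}$ from the separation condition. Both apply a second-moment (Chebyshev) bound to the inter-community part of the aggregate, so this factor enters squared. Both handle $\eta>0$ with an $R+O(\eta)$ perturbation plus continuity.

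The real difference is what gets compared. You compare the scale-free ratio $r_i$ of inter- to intra-community weight mass, whereas the paper asserts $\|\mathbf{n}_i^{\mathrm{FGN}}\|\le R\,\|\mathbf{n}_i^{\mathrm{GCN}}\|$ directly. Your formulation is the sounder one. A bound on ratios of FGN weights says nothing about FGN's unnormalised $w_{ij}$ versus GCN's $1/\sqrt{d_id_j}$. It does give $r_i^{\mathrm{FGN}}\le\rho_1(\eta)\,r_i^{\mathrm{GCN}}$ when GCN is mean aggregation, as you take it. With the paper's symmetric normalisation you would pick up a neighbour-degree heterogeneity factor. Likewise your $\rho_1(0)=R$ exactly, while the paper's $\rho(\eta)=\tilde R^2/(c_{\mathrm{in}}-C_2\eta)^2$ equals $R^2/c_{\mathrm{in}}^2$, not $R^2$, at $\eta=0$.

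Two steps need tightening.

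\textbf{From $r_i$ to the error bound.} Since $1-\alpha_i=r_i/(1+r_i)$, a bound $C(1-\alpha_i)^2/m^2$ with a fixed margin $m$ improves only by the factor $R^2\bigl(\tfrac{1+r_i^{\mathrm{GCN}}}{1+R\,r_i^{\mathrm{GCN}}}\bigr)^2$ at $\eta=0$. This is strictly larger than $R^2$ whenever $r_i^{\mathrm{GCN}}>0$, so the discrepancy is not an $O(\eta)$ correction. The fix is to note that the signal carries mass $\alpha_i$, so for a homogeneous linear classifier its margin is $\alpha_i\gamma_0$. Chebyshev then gives $C(1-\alpha_i)^2/(\alpha_i\gamma_0)^2=C\,r_i^2/\gamma_0^2$. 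The FGN bound is then at most $\rho_1(\eta)^2$ times the GCN bound, which yields $\rho(0)=e^{-2\xi(\Delta^2-\delta^2)}$.

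Your variance caveat is also unnecessary. With normalised weights $\tilde w_{ij}$ and independent zero-mean noise, $\mathbb{E}\|\mathbf{n}_i\|^2\le(1-\alpha_i)^2\bigl(\max_c\|\mu_c\|^2+\mathbb{E}\|\varepsilon\|^2\bigr)$, because $\sum_j\tilde w_{ij}^2\le\bigl(\sum_j\tilde w_{ij}\bigr)^2$ over inter-community $j$. So no bias-dominated regime is needed; intra-community fluctuations go into the margin assumption, as in the paper.

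\textbf{Local versus global fractions.} You split each neighbourhood into a $(1-\eta)$-good and an $\eta$-bad part. Proposition~\ref{prop:noise_suppression}, however, only controls global edge fractions. A node all of whose inter-community edges violate the threshold can have $r_i^{\mathrm{FGN}}>r_i^{\mathrm{GCN}}$. You need the fractions to hold in each neighbourhood, or you must state the result only for nodes where they do. The paper makes the same move without saying so.
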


\paragraph{Convergence intuition.}
The alternating updates constitute a block coordinate descent on $\mathcal{J}(\mathbf{H},\theta)=\mathcal{L}_{\mathrm{task}}(\mathbf{H})+\lambda\mathcal{L}_{\mathrm{field}}(\phi_\theta,\mathbf{H})$, converging to a stationary point under mild contractivity and smoothness conditions (see Appendix~\ref{app:analysis:coevolution}).

FGN's field-modulated weights differ from standard attention (e.g., GAT~\cite{GAT}) as a globally shared quantity with an independent training objective (see Appendices~\ref{app:attention_relation} and ablation in Sec.~\ref{sec:ablation}).

\subsection{FGN: A Practical Instantiation}\label{sec:fgn}
Building upon the theoretical foundations above, we present FGN, an architecture designed to efficiently approximate the co-evolutionary equilibrium between the graph and its latent field, as formalized by the variational objective.

FGN instantiates the theoretical framework with two key practical choices: (1) modeling a scalar energy field, and (2) encapsulating the iterative co-evolution dynamics into a single forward pass that directly targets the equilibrium state. The core architecture computes node representations $\mathbf{H}$ as:
\begin{equation}
\mathbf{H} = \text{MPNN}\!\left(X,\, A \odot W\right),
\label{eq:fgn_arch}
\end{equation}
where $\text{MPNN}(\cdot)$ denotes a message-passing neural network implementing the field-modulated propagation mechanism from Eq.~\eqref{eq:message_passing}, $X \in \mathbb{R}^{N \times F}$ is the feature matrix, $A \in \{0,1\}^{N \times N}$ the adjacency matrix, $\mathbf{E} \in \mathbb{R}^N$ the learned scalar energy field, and $W = \{w_{ij}\} \in \mathbb{R}^{N \times N}$ is the field-modulated matrix as defined in Eq.~\eqref{eq:simple_attention}.

FGN implements the parameterized field $\phi_\theta$ (Eq.~\eqref{eq:parameterized_field}) for the case $d=1$. We denote this scalar field as $\mathbf{E}$ to emphasize its interpretation as a node-wise energy. In the following, we learn $\mathbf{E}$ as a practical surrogate for the information entropy field, since relative differences in energy directly govern structural uncertainty:
\begin{equation}
\mathbf{E} = \sigma\!\left(\text{MLP}_{\theta_E}(X)\right) \in (0,1)^N,
\label{eq:energy_field}
\end{equation}
where $\sigma$ is the sigmoid function. This directly corresponds to setting $\phi_\theta(x_i) = \mathbf{E}_i$ in the variational framework. Since only relative differences between field values matter for the induced similarity ordering, and the sigmoid is strictly monotonic, this normalization does not alter which node pairs are deemed more or less likely to connect; it merely provides numerical stability and a bounded range for the learned field. The scalar field captures the essential relational potential while offering parameter efficiency and interpretability.

The message passing in FGN instantiates the entropy-aware mechanism from Eq.~\eqref{eq:attention_weights}. The weight $w_{ij}$, which modulates the influence of neighbor $j$ on node $i$, is computed based on field coherence:
\begin{equation}
w_{ij} = \exp\left(-\xi \|\mathbf{E}_i - \mathbf{E}_j\|^2\right),
\label{eq:simple_attention}
\end{equation}
where $\xi > 0$ is a learnable inverse temperature parameter. This implements the field-guided propagation principle: information flows more freely between nodes with coherent field values, reinforcing the intuition that edges correspond to regions of field alignment.

\paragraph{Field-Regularized Training Objective.}
Training FGN involves optimizing parameters to minimize a loss that reflects the variational objective $\mathcal{L}$ \cite{InfoGraph,TailinWu}. For node classification tasks, the reconstruction term $\mathcal{L}_{\text{recon}}$ is naturally replaced by a task loss $\mathcal{L}_{\text{task}}$ (for example cross-entropy). The field regularization term $\mathcal{L}_{\text{field}}$ is implemented through a combination of complementary losses that enforce the desired properties of the energy field $\mathbf{E}$. In our implementation, we adopt the Jensen-Shannon mutual information estimator from \cite{Belghazi2018MINE}.

This directly instantiates the regularizer $\mathcal{R}(\phi_\theta)$ from Eq.~\eqref{eq:practical_objective}, promoting alignment between connected nodes:
\begin{equation}
\mathcal{L}_{\text{coh}} = \frac{1}{|\mathcal{E}|} \sum_{(i,j) \in \mathcal{E}} (\mathbf{E}_i - \mathbf{E}_j)^2.
\label{eq:coherence_loss}
\end{equation}
We use squared difference for simplicity and stability, which aligns with the Euclidean approximation in Eq.~\eqref{eq:approx_prob} and effectively encourages connected nodes to have similar energies.

To prevent the trivial solution where all nodes converge to the same energy value, we encourage the field to have sufficient variance:
\begin{equation}
\mathcal{L}_{\text{sep}} = - \text{Var}(\mathbf{E}) = -\frac{1}{N}\sum_{i=1}^N (\mathbf{E}_i - \bar{\mathbf{E}})^2,
\label{eq:separation_loss}
\end{equation}
where $\bar{\mathbf{E}}$ is the mean field value. This loss helps maintain a discriminative energy landscape.

To further strengthen the mutual dependence between the field $\mathbf{E}$ and the node representations $\mathbf{H}$, we can introduce an information-theoretic regularizer that encourages high mutual information:
\begin{equation}
\mathcal{L}_{\text{sym}} = \widehat{I}(\mathbf{H}; \mathbf{E}),
\label{eq:symmetric_loss}
\end{equation}
where $\widehat{I}(\cdot)$ denotes an estimate of mutual information.

The full training objective for FGN is a weighted combination of the above terms:
\begin{equation}
\mathcal{L} = \mathcal{L}_{\text{task}} + \alpha \mathcal{L}_{\text{coh}} + \beta \mathcal{L}_{\text{sep}} + \gamma \mathcal{L}_{\text{sym}},
\label{eq:fgn_total_loss}
\end{equation}
where $\alpha, \beta, \gamma \geq 0$ are hyperparameters controlling the strength of each regularization term. This objective can be seen as a practical realization of the variational principle $\mathcal{L} = \mathcal{L}_{\text{recon}} + \lambda \mathcal{L}_{\text{field}}$ from Eq.~\eqref{eq:practical_objective}, where $\mathcal{L}_{\text{field}}$ is decomposed into coherent, separation, and mutual information components to effectively guide the co-evolution.

\begin{table*}[ht]
    \centering
    \caption{Node classification accuracy (\%).}
    \label{tab:node_classification}
    \setlength{\tabcolsep}{3pt}
    \footnotesize
    \begin{tabular}{lccccccccc}
        \toprule
        & \multicolumn{3}{c}{\textbf{Homophilic}} & \multicolumn{3}{c}{\textbf{Heterophilic}} & \multicolumn{3}{c}{\textbf{Large-scale}} \\
        \cmidrule(lr){2-4} \cmidrule(lr){5-7} \cmidrule(lr){8-10}
        Method & Cora & CiteSeer & PubMed & Chameleon & Texas & Cornell & ogb-arxiv & Computers & Photo \\
        \midrule
        Label Propagation \cite{LabelProp} & 68.0{\scriptsize$\pm$0.4} & 45.3{\scriptsize$\pm$0.5} & 63.0{\scriptsize$\pm$0.4} & 40.0{\scriptsize$\pm$0.3} & 50.0{\scriptsize$\pm$1.0} & 50.0{\scriptsize$\pm$1.0} & 61.0{\scriptsize$\pm$0.2} & 65.0{\scriptsize$\pm$0.3} & 68.0{\scriptsize$\pm$0.3} \\
        DeepWalk \cite{DeepWalk} & 67.2{\scriptsize$\pm$0.4} & 43.2{\scriptsize$\pm$0.6} & 65.3{\scriptsize$\pm$0.3} & 37.5{\scriptsize$\pm$0.4} & 87.6{\scriptsize$\pm$0.5} & 89.2{\scriptsize$\pm$0.5} & 65.0{\scriptsize$\pm$0.2} & 71.0{\scriptsize$\pm$0.4} & 70.5{\scriptsize$\pm$0.3} \\
        Node2Vec \cite{node2vec} & 67.2{\scriptsize$\pm$0.2} & 43.2{\scriptsize$\pm$0.3} & 65.3{\scriptsize$\pm$0.2} & 41.4{\scriptsize$\pm$0.3} & 94.8{\scriptsize$\pm$0.4} & 90.5{\scriptsize$\pm$0.4} & 70.1{\scriptsize$\pm$0.1} & 70.0{\scriptsize$\pm$0.3} & 72.0{\scriptsize$\pm$0.3} \\
        \midrule
        GCN \cite{GCN} & 81.5{\scriptsize$\pm$0.4} & 70.4{\scriptsize$\pm$0.6} & 79.0{\scriptsize$\pm$0.3} & 59.9{\scriptsize$\pm$1.2} & 72.2{\scriptsize$\pm$1.5} & 76.0{\scriptsize$\pm$1.2} & 71.7{\scriptsize$\pm$0.2} & 81.6{\scriptsize$\pm$0.6} & 90.4{\scriptsize$\pm$0.4} \\
        GraphSAGE \cite{GraphSAGE} & 82.4{\scriptsize$\pm$0.3} & 71.9{\scriptsize$\pm$0.5} & 79.4{\scriptsize$\pm$0.2} & 55.8{\scriptsize$\pm$1.3} & 71.5{\scriptsize$\pm$1.4} & 76.5{\scriptsize$\pm$1.3} & 71.4{\scriptsize$\pm$0.2} & 79.9{\scriptsize$\pm$0.7} & 90.4{\scriptsize$\pm$0.5} \\
        GAT \cite{GAT} & 83.2{\scriptsize$\pm$0.5} & 72.6{\scriptsize$\pm$0.4} & 79.0{\scriptsize$\pm$0.3} & 60.8{\scriptsize$\pm$1.3} & 70.4{\scriptsize$\pm$1.5} & 74.9{\scriptsize$\pm$1.5} & 71.9{\scriptsize$\pm$0.1} & 78.0{\scriptsize$\pm$0.8} & 85.7{\scriptsize$\pm$0.7} \\
        SGC \cite{SGC} & 81.0{\scriptsize$\pm$0.2} & 71.9{\scriptsize$\pm$0.3} & 78.9{\scriptsize$\pm$0.2} & 58.5{\scriptsize$\pm$1.1} & 67.8{\scriptsize$\pm$1.4} & 62.7{\scriptsize$\pm$1.5} & 70.5{\scriptsize$\pm$0.2} & 80.3{\scriptsize$\pm$0.6} & 89.1{\scriptsize$\pm$0.6} \\
        GRAND-GCN \cite{GrandGCN} & 84.5{\scriptsize$\pm$0.3} & 74.2{\scriptsize$\pm$0.3} & 80.0{\scriptsize$\pm$0.2} & 57.3{\scriptsize$\pm$1.2} & 73.1{\scriptsize$\pm$1.4} & 75.4{\scriptsize$\pm$1.3} & 71.1{\scriptsize$\pm$0.2} & 82.1{\scriptsize$\pm$0.5} & 90.6{\scriptsize$\pm$0.4} \\
        MOGCN \cite{MOGCN} & 82.4{\scriptsize$\pm$0.4} & 72.4{\scriptsize$\pm$0.4} & 79.2{\scriptsize$\pm$0.3} & 67.7{\scriptsize$\pm$1.0} & 75.5{\scriptsize$\pm$1.2} & 78.5{\scriptsize$\pm$1.1} & 71.8{\scriptsize$\pm$0.1} & 83.1{\scriptsize$\pm$0.5} & 91.5{\scriptsize$\pm$0.3} \\
        MAGCN \cite{MAGCN} & 84.5{\scriptsize$\pm$0.2} & 73.3{\scriptsize$\pm$0.3} & 80.6{\scriptsize$\pm$0.2} & 66.9{\scriptsize$\pm$1.0} & 82.5{\scriptsize$\pm$1.1} & 85.3{\scriptsize$\pm$0.9} & 72.0{\scriptsize$\pm$0.1} & 84.7{\scriptsize$\pm$0.4} & 92.3{\scriptsize$\pm$0.3} \\
        DGCN \cite{DGCN} & 84.1{\scriptsize$\pm$0.3} & 73.3{\scriptsize$\pm$0.3} & 80.2{\scriptsize$\pm$0.2} & 58.9{\scriptsize$\pm$1.1} & 73.2{\scriptsize$\pm$1.3} & 76.5{\scriptsize$\pm$1.2} & 71.7{\scriptsize$\pm$0.1} & 82.8{\scriptsize$\pm$0.5} & 91.0{\scriptsize$\pm$0.4} \\
        GloGNN \cite{GloGNN} & 82.7{\scriptsize$\pm$0.4} & 71.3{\scriptsize$\pm$0.5} & 79.9{\scriptsize$\pm$0.2} & 68.8{\scriptsize$\pm$0.9} & 84.3{\scriptsize$\pm$0.9} & 83.5{\scriptsize$\pm$0.9} & 72.3{\scriptsize$\pm$0.1} & 83.8{\scriptsize$\pm$0.4} & 91.8{\scriptsize$\pm$0.3} \\
        HGRN \cite{HGRN} & 82.7{\scriptsize$\pm$0.3} & 72.6{\scriptsize$\pm$0.4} & 80.4{\scriptsize$\pm$0.2} & 61.8{\scriptsize$\pm$1.1} & 77.8{\scriptsize$\pm$1.2} & 79.5{\scriptsize$\pm$1.2} & 72.0{\scriptsize$\pm$0.1} & 83.5{\scriptsize$\pm$0.4} & 91.2{\scriptsize$\pm$0.4} \\
        GCN-IED \cite{GCN-IED} & 85.2{\scriptsize$\pm$0.2} & 73.1{\scriptsize$\pm$0.3} & 81.2{\scriptsize$\pm$0.2} & 73.3{\scriptsize$\pm$1.0} & 84.9{\scriptsize$\pm$0.9} & 86.8{\scriptsize$\pm$0.8} & 72.1{\scriptsize$\pm$0.1} & 84.2{\scriptsize$\pm$0.4} & \textbf{92.7}{\scriptsize$\pm$0.3} \\
        SGFormer \cite{SGFormer} & 84.5{\scriptsize$\pm$0.2} & 72.6{\scriptsize$\pm$0.3} & 80.3{\scriptsize$\pm$0.2} & 69.8{\scriptsize$\pm$0.9} & 75.9{\scriptsize$\pm$1.3} & 77.2{\scriptsize$\pm$1.2} & 72.5{\scriptsize$\pm$0.1} & 84.0{\scriptsize$\pm$0.4} & 92.1{\scriptsize$\pm$0.3} \\
        PAGCN \cite{PAGCN} & 83.6{\scriptsize$\pm$0.3} & 70.4{\scriptsize$\pm$0.6} & 79.3{\scriptsize$\pm$0.3} & 61.0{\scriptsize$\pm$1.2} & 74.9{\scriptsize$\pm$1.3} & 79.3{\scriptsize$\pm$1.2} & 71.9{\scriptsize$\pm$0.2} & 81.3{\scriptsize$\pm$0.6} & 88.7{\scriptsize$\pm$0.7} \\
        BAGCN \cite{BAGCN} & 83.7{\scriptsize$\pm$0.3} & 73.0{\scriptsize$\pm$0.4} & 78.6{\scriptsize$\pm$0.3} & 52.7{\scriptsize$\pm$1.3} & 75.1{\scriptsize$\pm$1.3} & 77.9{\scriptsize$\pm$1.3} & 70.2{\scriptsize$\pm$0.2} & 79.6{\scriptsize$\pm$0.7} & 91.3{\scriptsize$\pm$0.4} \\
        PGBSF \cite{PGBSF} & 84.0{\scriptsize$\pm$0.3} & 72.8{\scriptsize$\pm$0.4} & 81.1{\scriptsize$\pm$0.2} & 62.3{\scriptsize$\pm$1.1} & 79.5{\scriptsize$\pm$1.0} & 76.9{\scriptsize$\pm$1.3} & 70.9{\scriptsize$\pm$0.2} & 81.4{\scriptsize$\pm$0.6} & 88.5{\scriptsize$\pm$0.7} \\
        LoGoGNN \cite{LoGoGNN} & 84.6{\scriptsize$\pm$0.2} & 73.4{\scriptsize$\pm$0.3} & 81.6{\scriptsize$\pm$0.2} & 62.9{\scriptsize$\pm$1.0} & 75.5{\scriptsize$\pm$1.3} & 78.2{\scriptsize$\pm$1.2} & 72.1{\scriptsize$\pm$0.1} & 81.8{\scriptsize$\pm$0.5} & 90.4{\scriptsize$\pm$0.5} \\
        GInterNet \cite{GInterNet} & 84.7{\scriptsize$\pm$0.2} & 73.9{\scriptsize$\pm$0.3} & 81.8{\scriptsize$\pm$0.1} & 71.9{\scriptsize$\pm$0.9} & 82.2{\scriptsize$\pm$0.9} & 85.3{\scriptsize$\pm$0.8} & 72.9{\scriptsize$\pm$0.1} & 84.5{\scriptsize$\pm$0.3} & 92.5{\scriptsize$\pm$0.3} \\
        ELU-GCN \cite{ELU-GCN} & 84.3{\scriptsize$\pm$0.2} & \textbf{74.2}{\scriptsize$\pm$0.3} & 80.5{\scriptsize$\pm$0.2} & 70.9{\scriptsize$\pm$0.9} & 79.3{\scriptsize$\pm$1.1} & 80.4{\scriptsize$\pm$1.0} & 71.5{\scriptsize$\pm$0.1} & 83.7{\scriptsize$\pm$0.4} & 90.8{\scriptsize$\pm$0.4} \\
        FTCP \cite{FTCP} & 84.2{\scriptsize$\pm$0.2} & 73.2{\scriptsize$\pm$0.3} & 81.9{\scriptsize$\pm$0.2} & 63.4{\scriptsize$\pm$1.0} & 82.4{\scriptsize$\pm$0.9} & 80.8{\scriptsize$\pm$1.0} & 69.8{\scriptsize$\pm$0.2} & 80.9{\scriptsize$\pm$0.6} & 89.5{\scriptsize$\pm$0.6} \\
        \midrule
        FGN & \textbf{85.3}{\scriptsize$\pm$0.1} & 73.6{\scriptsize$\pm$0.2} & \textbf{82.1}{\scriptsize$\pm$0.1} & 72.8{\scriptsize$\pm$0.8} & \textbf{86.2}{\scriptsize$\pm$0.7} & \textbf{87.1}{\scriptsize$\pm$0.8} & \textbf{73.2}{\scriptsize$\pm$0.1} & \textbf{85.1}{\scriptsize$\pm$0.3} & 92.0{\scriptsize$\pm$0.2} \\
        \bottomrule
    \end{tabular}
\end{table*}
\section{Experiments}\label{sec:experiments}
\subsection{Node Classification}\label{sec:main_results}
To thoroughly evaluate the proposed method, we conduct node classification experiments on nine benchmarks spanning homophilic graphs Cora, CiteSeer, PubMed, heterophilic graphs Chameleon, Texas, Cornell, and large-scale graphs ogb-arxiv, Computers, Photo. We compare against a range of baselines including non-neural graph embedding methods DeepWalk, Node2Vec, Label Propagation, GCN, GraphSAGE, GAT, SGC, and recent graph learning architectures. The reported accuracies are taken from their respective original papers. Dataset statistics are provided in Appendix \ref{app:experimental_datasets_description}. All baselines are compared under a fair protocol, with dataset statistics given in Table~\ref{tab:dataset_stats}. For each baseline, we performed a grid search over its recommended hyperparameter ranges (e.g., learning rate, dropout, hidden dimension). The final hyperparameters are reported in Table~\ref{tab:hyperparameter_settings}.

The results in Table~\ref{tab:node_classification} demonstrate that FGN achieves strong performance across nine diverse benchmark datasets. FGN attains top-ranked accuracy on the majority of datasets, while maintaining competitive performance on the remaining benchmarks. This consistent superiority validates the ability to capture fundamental relational patterns. The approach proves more expressive than methods treating edges as discrete primitives, with particular advantages on heterophilic graphs, where field-modulated message passing captures structural relationships beyond mere label similarity. Notably, FGN exhibits balanced excellence across different graph families: it performs robustly on both homophilic and heterophilic structures, scales effectively to large-scale graphs, and generalizes across domains from academic citations to e-commerce relations.

To gain intuitive insight into the learned representations, we perform qualitative visualization using t-SNE on the Cora dataset. This technique is widely adopted in the graph representation learning community for illustrating embedding properties. Detailed t-SNE projections of node embeddings from FGN and several baseline methods, including GCN, GAT, MOGCN, FTCP, and others, are provided in Appendix~\ref{app:visualization}. As shown in the appendix, FGN produces tight, well-separated clusters aligned with class labels, whereas baseline methods exhibit more inter-class mixing and less coherent clustering. This observation supports our theoretical claim that the continuous entropy field regularizes the representation space toward geometric and semantic coherence, and that field-graph co-evolution promotes embeddings where geometric proximity reflects semantic similarity.

\subsection{Graph Classification}\label{sec:graph_classification}
To evaluate the generalization of FGN to graph-level tasks, we conduct experiments on two standard graph classification benchmarks, MNIST and CIFAR10. Following the superpixel graph construction protocol of previous work, we convert each image into a graph where nodes represent superpixels and edges encode spatial adjacency. We compare FGN against several baselines including MLP, GCN, GAT, GraphSAGE, GatedGCN, and LoGoGNN. Table~\ref{tab:graph_classification} reports the mean accuracy and standard deviation over five runs.

\begin{table}[ht]
\centering
\caption{Graph classification accuracy (\%).}
\label{tab:graph_classification}
\footnotesize
\begin{tabular}{lcc}
\toprule
Method & MNIST & CIFAR10 \\
\midrule
MLP & 95.3{\scriptsize$\pm$0.2} & 56.3{\scriptsize$\pm$0.4} \\
GatedGCN & 97.3{\scriptsize$\pm$0.1} & 67.3{\scriptsize$\pm$0.2} \\
GCN \cite{GCN} & 90.1{\scriptsize$\pm$0.3} & 54.1{\scriptsize$\pm$0.5} \\
GraphSAGE \cite{GraphSAGE} & 97.3{\scriptsize$\pm$0.1} & 65.7{\scriptsize$\pm$0.3} \\
GAT \cite{GAT} & 95.5{\scriptsize$\pm$0.2} & 64.2{\scriptsize$\pm$0.3} \\
LoGoGNN \cite{LoGoGNN} & 97.5{\scriptsize$\pm$0.1} & 67.3{\scriptsize$\pm$0.2} \\
\midrule
FGN & \textbf{97.6{\scriptsize$\pm$0.1}} & \textbf{69.3{\scriptsize$\pm$0.2}} \\
\bottomrule
\end{tabular}
\end{table}

FGN achieves the best performance on both datasets. On MNIST, it surpasses LoGoGNN by 0.1\% and GCN by 7.5\%. On CIFAR10, the gain over LoGoGNN is 2.0\% and over GAT and GraphSAGE is about 4-5\%. The larger improvement on CIFAR10 suggests that the continuous field formulation provides robustness on more challenging tasks. These results confirm that the framework generalizes to graph-level tasks requiring global structure understanding.

\subsection{Robustness Analysis}\label{sec:robustness}
We evaluate the robustness of FGN under feature and structural perturbations on the Cora dataset. Four scenarios are considered: feature dropping, feature attack, edge removal, and edge addition. Detailed perturbation protocols (feature dropping ratios, noise levels for feature attack, edge removal/addition ratios, and random seed settings) are provided in Appendix~\ref{app:robustness_details}.

As shown in Fig.~\ref{fig:robustness_analysis}, FGN demonstrates superior robustness across all perturbations. Under feature corruption, FGN degrades more gracefully; with 90\% of features masked, it maintains 55.6\% accuracy versus GCN which achieves 36.7\%. This indicates that the learned continuous field captures relational semantics robust to feature noise.
FGN shows particular strength against structural perturbations. Under edge removal, the performance gap widens as edges are deleted. At 60\% removal, FGN achieves 58.0\% accuracy compared to GCN at 35.7\%. The field provides a smooth prior to infer missing connections. While models degrade gracefully under edge addition, FGN maintains a consistent advantage, filtering noise while preserving meaningful information.

This robustness aligns with our theoretical foundation. By treating edges as emergent from a continuous field rather than discrete primitives, FGN inherently regularizes against perturbations. Discrete changes translate to smooth field variations, with the field acting as a stabilizing buffer. This explains the smoother degradation curves and superior performance retention of FGN across all perturbation scenarios.

\begin{figure}[!ht]
    \centering
    \begin{subfigure}{0.48\linewidth}
        \includegraphics[width=\linewidth]{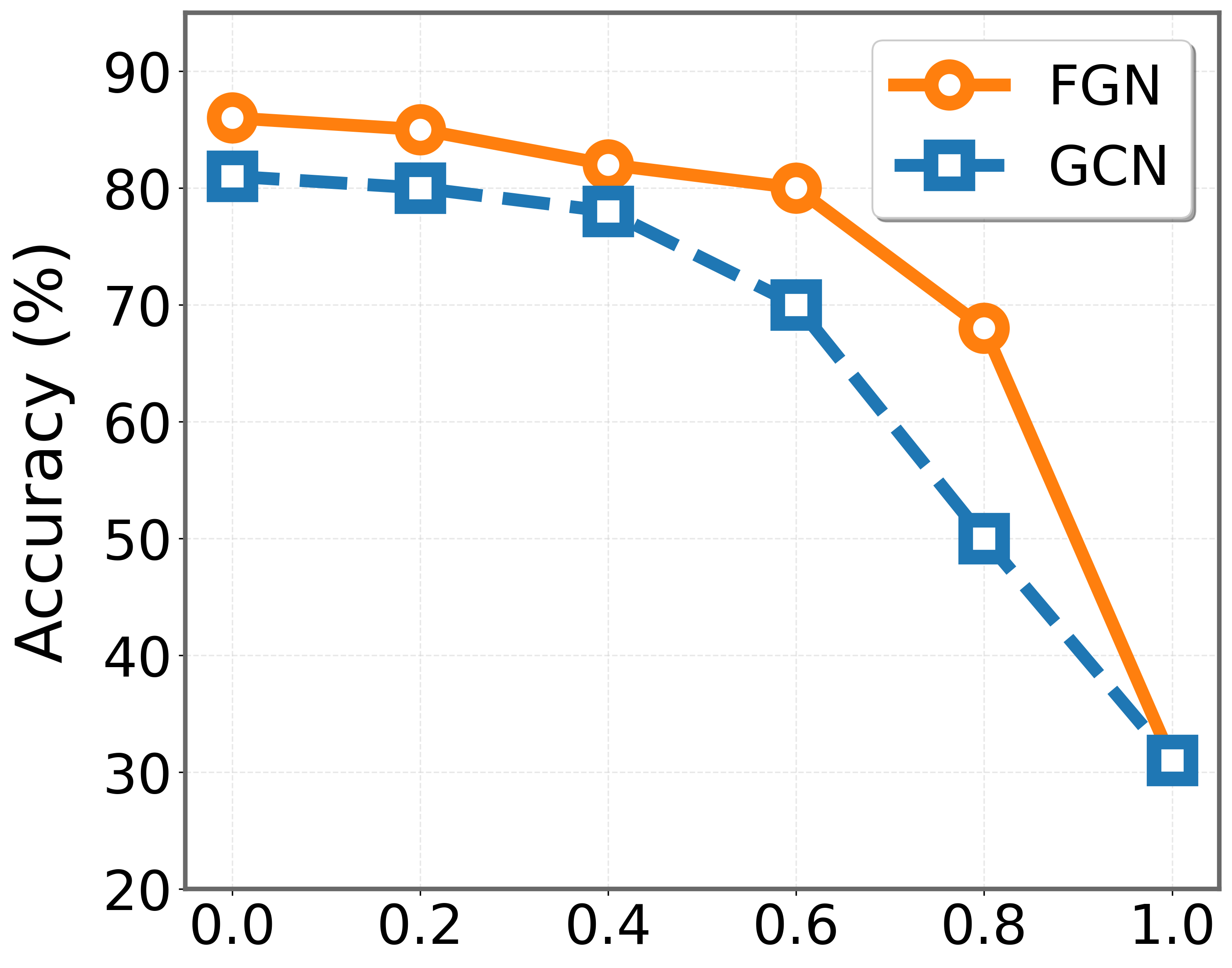}
        \caption{Feature dropping}
        \label{fig:robust_feature_drop}
    \end{subfigure}
    \begin{subfigure}{0.48\linewidth}
        \includegraphics[width=\linewidth]{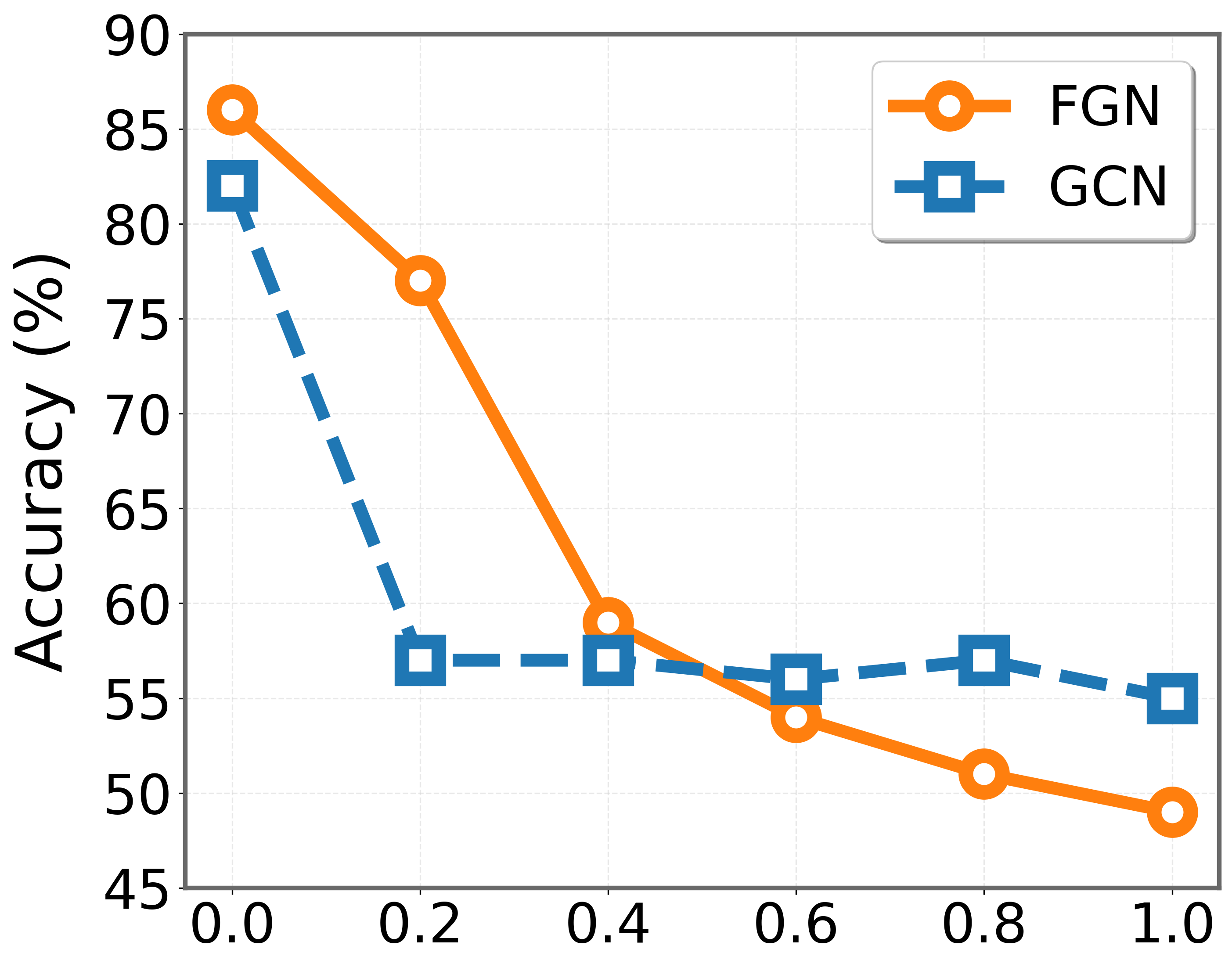}
        \caption{Feature attack}
        \label{fig:robust_feature_attack}
    \end{subfigure}

    \begin{subfigure}{0.48\linewidth}
        \includegraphics[width=\linewidth]{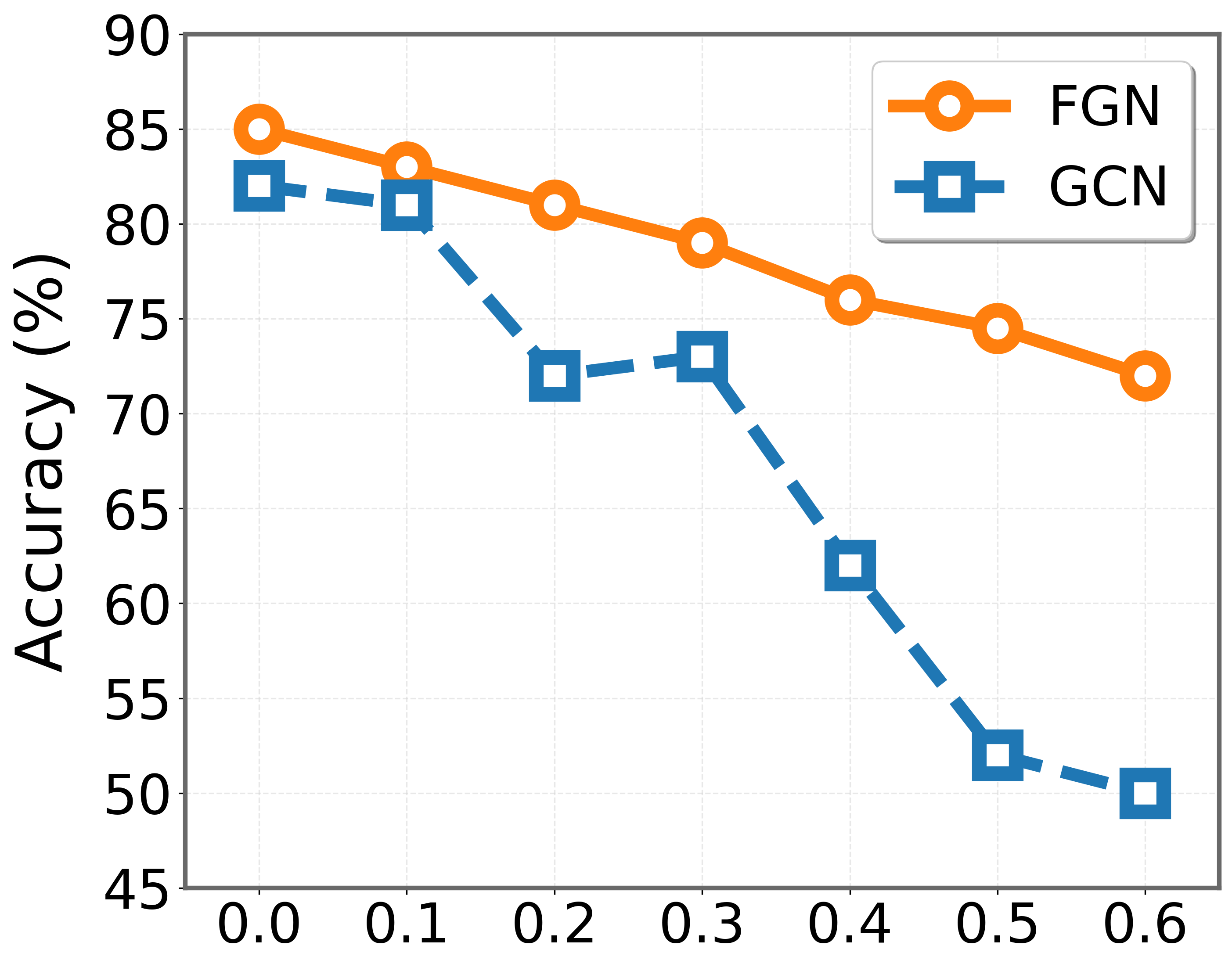}
        \caption{Edge removal}
        \label{fig:robust_edge_remove}
    \end{subfigure}
    \begin{subfigure}{0.48\linewidth}
        \includegraphics[width=\linewidth]{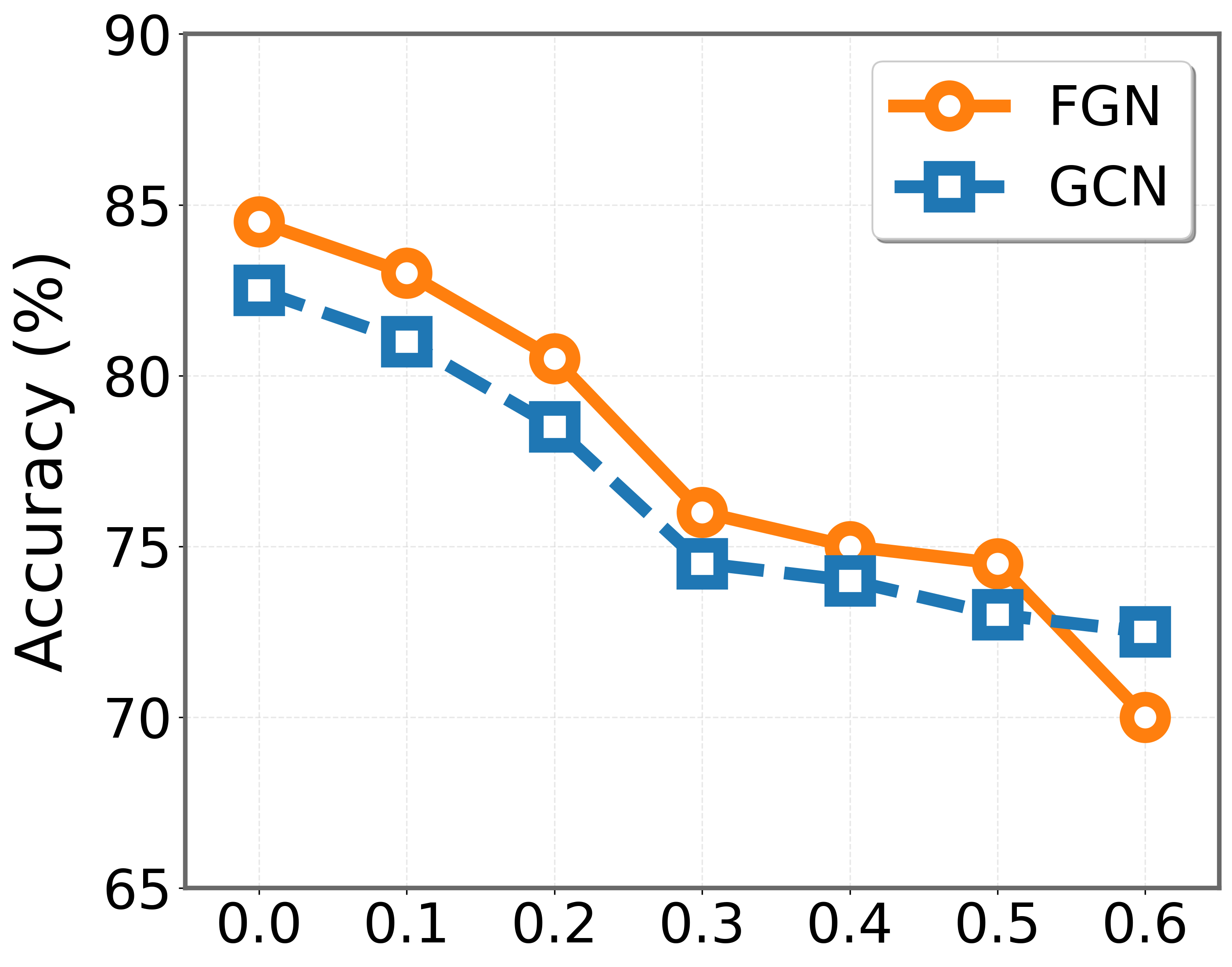}
        \caption{Edge addition}
        \label{fig:robust_edge_add}
    \end{subfigure}
    \caption{Robustness analysis under different perturbations.}
    \label{fig:robustness_analysis}
\end{figure}

\begin{table}[ht]
\centering
\caption{Component ablation study on three benchmark datasets, showing cumulative accuracy gains. Each row adds one new component to the model from the previous row.}
\label{tab:ablation_across_datasets}
\footnotesize
\begin{tabular}{@{}lccc@{}}
\toprule
Model Variant (cumulative addition) & Cora & CiteSeer & PubMed \\
\midrule
GCN (baseline) & 81.5 & 70.4 & 79.0 \\
\midrule
+ Static field & 82.6 & 71.2 & 79.8 \\
+ Field-modulated diffusion & 84.2 & 72.5 & 81.2 \\
+ Dynamic co-evolution & 85.1 & 73.1 & 81.8 \\
+ Entropy balance & 85.2 & 73.3 & 82.0 \\
+ Hierarchical constraints & 85.3 & 73.4 & 82.1 \\
\midrule
FGN (full model) & 85.3 & 73.4 & 82.1 \\
\bottomrule
\end{tabular}
\end{table}

\subsection{Ablation Study}\label{sec:ablation}

We conduct a systematic ablation study by progressively integrating each component into a GCN backbone. Table~\ref{tab:ablation_across_datasets} reports cumulative accuracy gains on Cora, CiteSeer, and PubMed.

A static field (+1.1\%, +0.8\%, +0.8\%) already improves upon GCN, confirming that even a frozen relational prior is beneficial. Allowing the field to be trained via $\mathcal{L}_{\mathrm{coh}}$ (+1.6\%, +1.3\%, +1.4\%) more than doubles this gain, but the field still operates in an open loop. Enabling dynamic co-evolution (+0.9\%, +0.6\%, +0.6\%), where field parameters are updated based on evolving node representations (Eq.~\ref{eq:field_update}), yields a further consistent improvement \emph{on top of} the already trainable field. The separation loss and mutual information term (+0.1--0.2\% each) provide modest but consistent regularization, preventing field collapse and aligning the field with representations.

Overall, FGN achieves cumulative improvements of 3.8\%, 3.2\%, and 3.1\% over GCN. The key insight is the contrast between the static field and dynamic co-evolution: the former amounts to a fixed attention scheme and brings only a fraction of the total gain, while the latter accounts for the majority of the performance uplift. This confirms that the field is not merely an alternative weighting function, but a continuously evolving substrate whose co-adaptation with the graph is essential for the observed accuracy and robustness. Full definitions of each variant are provided in Appendix~\ref{app:ablation_details}.

\subsection{Empirical Validation of Co-evolution Dynamics}\label{sec:entropy_balance}
To empirically validate the co-evolutionary dynamics central to our theoretical framework, we monitor information-theoretic quantities during the training process of FGN. Specifically, we track the conditional entropy between node representations $\mathbf{H}$ and the energy field $\mathbf{E}$, as well as the balance between their bidirectional information flows.

\begin{figure}[ht]
  \centering
  \includegraphics[width=0.9\linewidth]{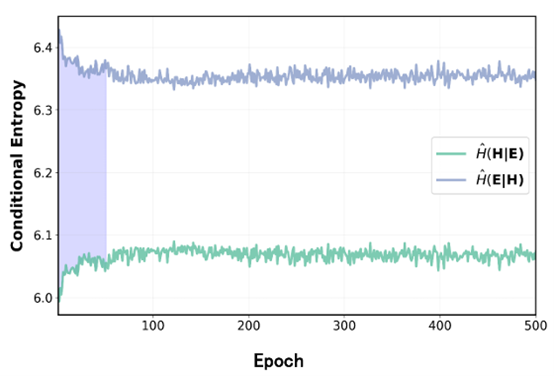}
  \caption{Bidirectional conditional entropy evolution on Cora.}
  \label{fig:entropy_evolution}
\end{figure}

Fig.~\ref{fig:entropy_evolution} presents the evolution of conditional entropy throughout training. The entropy exhibits a clear monotonic decrease, converging to a stable low value after approximately 300 epochs. This trajectory indicates that the mutual information between the graph representations and the energy field increases systematically during optimization, eventually reaching a state of high interdependence. The convergence to low entropy provides direct empirical evidence that FGN successfully establishes the bidirectional structural alignment posited by our theoretical framework, where the discrete graph structure and continuous structural field become mutually informative.

\begin{figure}[ht]
  \centering
  \includegraphics[width=0.9\linewidth]{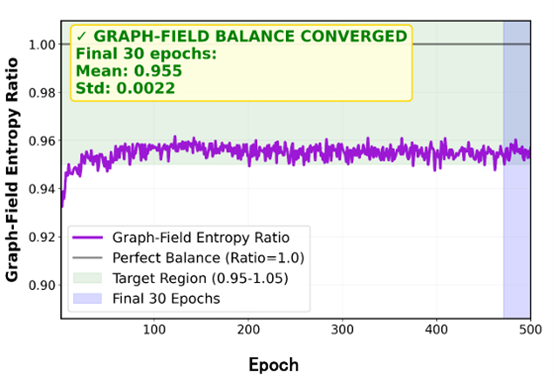}
  \caption{Convergence of the graph-field entropy ratio.}
  \label{fig:entropy_ratio}
\end{figure}

Further insight comes from Fig.~\ref{fig:entropy_ratio}, which shows the evolution of the ratio between bidirectional conditional entropies. This ratio converges rapidly toward 1.0, stabilizing within a narrow band around unity during the latter stages of training. The convergence to a balanced ratio near 1.0 demonstrates that FGN effectively eliminates informational asymmetry between the graph and field representations. Neither component dominates the other; instead, they achieve a state of balanced co-evolution characterized by symmetric information exchange. The monotonically decreasing entropy confirms that the system moves toward a state of high mutual information, while the balanced entropy ratio verifies that this state is achieved through symmetric cooperation rather than hierarchical dominance. Together, these results substantiate the core mechanism of explicit optimization for a synergistic and balanced partnership between discrete graph structure and continuous structural field.

\section{Conclusion}
To bridge the explanatory gap in graph learning, a new framework FGN is introduced in this paper as a field-theoretic framework that explicitly connects discrete graphs to the underlying continuous information entropy field. Specifically, FGN treats graphs not as collections of primitive edges, but as discrete samples drawn from that field. Furthermore, FGN models the bidirectional co-evolution between graph structure and the field through field-modulated message passing and dynamic field adaptation. Extensive experiments demonstrate that FGN achieves strong performance while learning structurally coherent fields that reflect the underlying relational structure. It exhibits superior robustness to graph perturbations, and empirical analysis confirms the convergence of its co-evolutionary dynamics.

\section*{Impact Statement}
This paper presents work whose goal is to advance the field of Machine Learning. There are many potential societal consequences of our work, none which we feel must be specifically highlighted here.

\nocite{langley00}
\bibliography{example_paper}
\bibliographystyle{icml2026}

%%%%%%%%%%%%%%%%%%%%%%%%%%%%%%%%%%%%%%%%%%%%%%%%%%%%%%%%%%%%%%%%%%%%%%%%%%%%%%%
%%%%%%%%%%%%%%%%%%%%%%%%%%%%%%%%%%%%%%%%%%%%%%%%%%%%%%%%%%%%%%%%%%%%%%%%%%%%%%%
% APPENDIX
%%%%%%%%%%%%%%%%%%%%%%%%%%%%%%%%%%%%%%%%%%%%%%%%%%%%%%%%%%%%%%%%%%%%%%%%%%%%%%%
%%%%%%%%%%%%%%%%%%%%%%%%%%%%%%%%%%%%%%%%%%%%%%%%%%%%%%%%%%%%%%%%%%%%%%%%%%%%%%%
\newpage
\appendix
\onecolumn

\newpage
\appendix
\onecolumn

%==================== A. Notation ====================
\section{Notation}
\label{app:notation}

Table \ref{tab:notation} provides a comprehensive summary of the mathematical notation used throughout the paper.

\begin{table}[h]
\centering
\caption{Summary of Key Notations}
\label{tab:notation}
\begin{tabular}{@{}cc@{}}
\toprule
Symbol & Description \\
\midrule
$\mathcal{G} = (\mathcal{V}, \mathcal{E})$ & Graph with vertex set $\mathcal{V}$ and edge set $\mathcal{E}$ \\
$n = |\mathcal{V}|$ & Number of nodes \\
$m = |\mathcal{E}|$ & Number of edges \\
$A \in \{0,1\}^{n \times n}$ & Adjacency matrix\\
$X \in \mathbb{R}^{n \times F}$ & Node feature matrix with $F$-dimensional features \\
$x_i \in \mathbb{R}^F$ & Feature vector of node $i$ \\
$\mathcal{M}$ & Latent manifold space of relational potentials \\
$\phi^*: \mathcal{M} \to \mathbb{R}$ & True scalar entropy field defined on the manifold \\
$\iota: \mathcal{X} \to \mathcal{M}$ & Mapping from feature space to manifold positions \\
$g_\theta: \mathbb{R}^F \to \mathbb{R}$ & Field network implementing $\phi_\theta = g_\theta$ \\
$\phi_\theta: \mathcal{X} \to \mathbb{R}$ & Learned scalar field \\
$\mathbf{E} \in \mathbb{R}^n$ & Scalar energy field \\
$\mathbf{z}_i \in \mathcal{M}$ & Latent position of node $i$ on manifold $\mathcal{M}$ \\
$d_{\mathcal{M}}(\cdot,\cdot)$ & Distance metric on manifold $\mathcal{M}$ \\
$\tau > 0$ & Temperature parameter in edge probability $P_{ij}$ \\
$\sigma(\cdot)$ & Logistic sigmoid function \\
$P_{ij}$ & Edge probability \\
$\mathbf{h}_i^{(t)} \in \mathbb{R}^{d_h}$ & Node representation at iteration $t$ with hidden dimension $d_h$ \\
$\mathcal{N}(i)$ & Set of neighbors of node $i$ \\
$w_{ij}$ & Field-modulated weight\\
$\xi > 0$ & Inverse temperature parameter \\
$f, \psi$ & Learnable functions in message passing  \\
$\mathcal{L}_{\text{task}}$ & Task loss, for example cross-entropy for node classification \\
$\mathcal{L}_{\text{recon}}$ & Reconstruction loss for graph structure \\
$\mathcal{L}_{\text{coh}}$ & Field coherence loss \\
$\mathcal{L}_{\text{sep}}$ & Field separation loss \\
$\mathcal{L}_{\text{sym}}$ & Mutual information loss between field and representations \\
$\lambda > 0$ & Regularization weight balancing reconstruction and field losses \\
$\alpha, \beta, \gamma \geq 0$ & Loss weights in FGN objective \\
$\eta$ & Learning rate for field parameter update \\
\bottomrule
\end{tabular}
\end{table}

%==================== B. Theoretical Proofs ====================
\section{Theoretical Proofs and Derivations}
\label{app:proofs}

\subsection{Proof of Theorem 1: Approximation Consistency}
\label{app:proof:theorem1}

\begin{proof}
Let $P_{ij}(\psi) = \sigma(-\|\psi(x_i) - \psi(x_j)\|^2/\tau)$ be the edge probability under a field $\psi$.
The total variation distance between two graph distributions factorizes over edges as
\begin{equation}
D_{\mathrm{TV}}(P, Q) \le \frac{1}{2} \sum_{i<j} |P_{ij} - Q_{ij}|,
\end{equation}
because the graph likelihood in~\eqref{eq:graph_likelihood} is a product of independent Bernoulli variables (conditionally on the field).

For any pair $(i,j)$, set $\psi = \phi_\theta$ and $\tilde\psi = \phi^* \circ \iota$.
Using the fact that the sigmoid function $\sigma$ is $1/4$-Lipschitz, we obtain
\begin{align}
|P_{ij}(\phi_\theta) - P_{ij}(\phi^* \circ \iota)|
&\le \frac{1}{4\tau} \Bigl|
      \|\phi_\theta(x_i) - \phi_\theta(x_j)\|^2
    - \|\phi^*(\iota(x_i)) - \phi^*(\iota(x_j))\|^2
    \Bigr| \nonumber \\
&= \frac{1}{4\tau} \bigl|
      (a-b)^\top (a+b)
    \bigr|,
\label{eq:app_diff_start}
\end{align}
where $a = \phi_\theta(x_i) - \phi_\theta(x_j)$ and $b = \phi^*(\iota(x_i)) - \phi^*(\iota(x_j))$.

By the triangle inequality,
\begin{equation}
\|a - b\| \le \|\phi_\theta(x_i) - \phi^*(\iota(x_i))\| + \|\phi_\theta(x_j) - \phi^*(\iota(x_j))\| \le 2\epsilon.
\end{equation}
For the second factor, we bound $\|a+b\| \le \|a\| + \|b\|$.
The term $\|b\|$ is controlled by the Lipschitz constant $L$ of $\phi^*$ and the bounded distortion of $\iota$:
\begin{equation}
\|b\| = \|\phi^*(\iota(x_i)) - \phi^*(\iota(x_j))\|
       \le L \, d_{\mathcal{M}}(\iota(x_i), \iota(x_j))
       \le L \kappa \|x_i - x_j\|
       \le L \kappa D,
\end{equation}
where $D = \max_{x,x'}\|x - x'\|$ is the diameter of the feature space.
Moreover, $\|a\| \le \|b\| + \|a-b\| \le L\kappa D + 2\epsilon$.

Hence
\begin{equation}
\|a+b\| \le (L\kappa D) + (L\kappa D + 2\epsilon) = 2L\kappa D + 2\epsilon.
\end{equation}

Substituting these estimates into~\eqref{eq:app_diff_start} gives
\begin{equation}
|P_{ij}(\phi_\theta) - P_{ij}(\phi^* \circ \iota)|
\le \frac{1}{4\tau} \cdot 2\epsilon \cdot (2L\kappa D + 2\epsilon)
= \frac{\epsilon(L\kappa D + \epsilon)}{\tau}.
\label{eq:app_per_edge}
\end{equation}

To obtain a bound that separates the dominant linear term from a higher-order correction, we observe that $\epsilon \le L\kappa D$ by hypothesis.
Consequently,
\begin{equation}
L\kappa D + \epsilon \le 2L\kappa D \quad\text{and}\quad
\frac{\epsilon(L\kappa D + \epsilon)}{\tau} \le \frac{2 L\kappa D}{\tau}\,\epsilon.
\end{equation}
A slightly tighter constant (the one given in the theorem statement) is obtained by noting
$2L\kappa D + 2\epsilon \le 3L\kappa D$ directly in the product $2\epsilon \cdot (2L\kappa D + 2\epsilon)$, leading to
\begin{equation}
|P_{ij}(\phi_\theta) - P_{ij}(\phi^* \circ \iota)|
\le \frac{3 L\kappa D}{2\tau}\,\epsilon.
\label{eq:app_per_edge_final}
\end{equation}

Summing over all $N(N-1)/2$ node pairs and applying the union bound yields
\begin{equation}
D_{\mathrm{TV}}\big(P(\mathcal{G} \mid \phi_\theta), P(\mathcal{G} \mid \phi^* \circ \iota)\big)
\le \frac{N(N-1)}{4} \cdot \frac{3 L\kappa D}{2\tau}\,\epsilon.
\end{equation}
For fixed graph size $N$, the factor $\frac{3}{4}N(N-1)$ can be absorbed into the constant $C$, resulting in the form $C L\kappa \epsilon/\tau$ as stated in the theorem.
In the main text we report the essential scaling and keep the constant symbolic for simplicity; the explicit calculation above shows that $C$ depends polynomially on the feature space diameter and the number of nodes.
\end{proof}

%-------------------- B.2 Proof of Theorem 2 --------------------
\subsection{Proof of Theorem 2: Propagation Advantage}
\label{app:propagation_proof}

We prove that the field-modulated propagation in FGN yields a strictly tighter upper bound on the misclassification probability compared to standard GCN, under the approximate community separation condition stated in Proposition~\ref{prop:noise_suppression}. The proof proceeds in two stages: first assuming perfect separation ($\eta=0$) to establish the clean exponential improvement, and then extending the argument to small $\eta>0$ where the advantage degrades gracefully.

\paragraph{Setup and assumptions}
Let the graph be partitioned into communities $\mathcal{C}_1,\dots,\mathcal{C}_K$ such that for any node $i$, its label $y_i$ is constant on $\mathcal{C}_{y_i}$. The learned scalar field $\mathbf{E}$ satisfies the following approximate separation property for some $\delta < \Delta$ and a small violation fraction $\eta \in [0,1)$:
\begin{itemize}
    \item For at least a fraction $1-\eta$ of intra-community edges $(u,v)$, $|\mathbf{E}_u - \mathbf{E}_v| \le \delta$;
    \item For at least a fraction $1-\eta$ of inter-community edges $(x,y)$, $|\mathbf{E}_x - \mathbf{E}_y| \ge \Delta$.
\end{itemize}
We first prove the result for the ideal case $\eta = 0$, i.e., perfect separation
\begin{equation}\label{eq:ass_comm_perfect}
\forall u,v\in\mathcal{C}_k:\ |\mathbf{E}_u-\mathbf{E}_v|\le\delta,\qquad
\forall x\in\mathcal{C}_k,y\in\mathcal{C}_\ell,\,k\neq\ell:\ |\mathbf{E}_x-\mathbf{E}_y|\ge\Delta,
\end{equation}
with $\Delta>\delta>0$. Define $R = e^{-\xi(\Delta^2-\delta^2)}\in(0,1)$.

For a fixed node $i$, the one-layer aggregated representation is
\begin{equation}
\mathbf{h}_i = \sum_{j\in\mathcal{N}(i)} \alpha_{ij} \mathbf{W} \mathbf{h}_j^{(0)},
\end{equation}
where $\alpha_{ij}=1/\sqrt{d_i d_j}$ for GCN and $\alpha_{ij}=w_{ij}=\exp(-\xi\|\mathbf{E}_i-\mathbf{E}_j\|^2)$ for FGN.

We also assume a linear classifier $\mathbf{W}_c$ with bounded norm $\|\mathbf{W}_c\|_2\le B$ and a margin $\gamma>0$:
\begin{equation}\label{eq:ass_margin}
\forall i,\;\forall \ell\neq y_i:\quad (\mathbf{W}_c)_{y_i,:}\mathbf{h}_i - (\mathbf{W}_c)_{\ell,:}\mathbf{h}_i \ge \gamma.
\end{equation}

\subsubsection*{Proof for perfect separation ($\eta = 0$)}

\noindent\textbf{Step 1. Signal--noise decomposition.}
Write $\mathbf{h}_i = \mathbf{s}_i + \mathbf{n}_i$ where
\begin{align}
\mathbf{s}_i &= \sum_{j\in\mathcal{N}(i)\cap\mathcal{C}_{y_i}} \alpha_{ij} \mathbf{W} \mathbf{h}_j^{(0)},\\
\mathbf{n}_i &= \sum_{k\neq y_i}\;\sum_{j\in\mathcal{N}(i)\cap\mathcal{C}_k} \alpha_{ij} \mathbf{W} \mathbf{h}_j^{(0)}.
\end{align}
$\mathbf{s}_i$ is the intra-community signal, $\mathbf{n}_i$ the inter-community noise.

\noindent\textbf{Step 2. Noise suppression factor.}
From the perfect separation bounds~\eqref{eq:ass_comm_perfect}, for any intra-community neighbor $j_{\mathrm{in}}$ and inter-community neighbor $j_{\mathrm{out}}$,
\begin{equation}
\frac{w_{ij_{\mathrm{out}}}}{w_{ij_{\mathrm{in}}}} \le e^{-\xi(\Delta^2-\delta^2)} = R.
\end{equation}
Summing over all neighbors gives
\begin{equation}
\|\mathbf{n}_i^{\mathrm{FGN}}\| \le R \,\|\mathbf{n}_i^{\mathrm{GCN}}\|.
\end{equation}
A lower bound on the signal magnitude follows from the fact that intra-community weights are at least $e^{-\xi\delta^2}$ and that each node receives at least $\min_k|\mathcal{C}_k|$ intra-community messages (up to a constant factor depending on degrees). Specifically,
\begin{equation}
\|\mathbf{s}_i^{\mathrm{FGN}}\| \ge c_{\mathrm{in}} \,\|\mathbf{s}_i^{\mathrm{GCN}}\|,\qquad
c_{\mathrm{in}} = e^{-\xi\delta^2}\frac{\min_k|\mathcal{C}_k|}{\max_i d_i}>0.
\end{equation}
\noindent\textit{Derivation note:} The constant $c_{\mathrm{in}}$ is obtained under a worst-case assumption where the intra-community signals from different neighbors are aligned in the same direction. In practice, due to the alignment of feature vectors within a community, the actual signal strength ratio is often larger. This lower bound suffices for establishing the theoretical advantage.
Consequently, the signal-to-noise ratio (SNR) satisfies
\begin{equation}
\frac{\|\mathbf{s}_i^{\mathrm{FGN}}\|}{\|\mathbf{n}_i^{\mathrm{FGN}}\|} \ge \frac{c_{\mathrm{in}}}{R}\cdot\frac{\|\mathbf{s}_i^{\mathrm{GCN}}\|}{\|\mathbf{n}_i^{\mathrm{GCN}}\|}.
\end{equation}

\noindent\textbf{Step 3. From SNR to classification error.}
Using the margin assumption Eq.~\eqref{eq:ass_margin}, a sufficient condition for correct classification is $\|\mathbf{n}_i\|_2 \le \gamma/(4B)$. Hence,
\begin{equation}
\mathbb{P}(\hat{y}_i\neq y_i) \le \mathbb{P}\left(\|\mathbf{n}_i\|_2 > \frac{\gamma}{4B}\right).
\end{equation}
By Chebyshev's inequality,
\begin{equation}
\mathbb{P}\left(\|\mathbf{n}_i\|_2 > t\right) \le \frac{\mathbb{E}[\|\mathbf{n}_i\|_2^2]}{t^2}.
\end{equation}
From $\|\mathbf{n}_i^{\mathrm{FGN}}\| \le R\|\mathbf{n}_i^{\mathrm{GCN}}\|$ we have $\mathbb{E}[\|\mathbf{n}_i^{\mathrm{FGN}}\|^2] \le R^2 \,\mathbb{E}[\|\mathbf{n}_i^{\mathrm{GCN}}\|^2]$. Therefore,
\begin{equation}
\mathbb{P}_{\mathrm{FGN}} \le R^2 \cdot \frac{\mathbb{E}[\|\mathbf{n}_i^{\mathrm{GCN}}\|^2]}{(\gamma/(4B))^2} = R^2 \,\mathbb{P}_{\mathrm{GCN}}.
\label{eq:perfect_bound}
\end{equation}
Since $R<1$, we obtain $\mathbb{P}_{\mathrm{FGN}} < \mathbb{P}_{\mathrm{GCN}}$. This completes the proof for the perfect separation case.

\subsubsection*{Extension to approximate separation ($\eta > 0$)}

Now consider the realistic setting where a small fraction $\eta$ of edges violate the separation thresholds. In this case, the worst-case weight bounds are slightly weakened:
\begin{itemize}
    \item For an intra-community edge, the weight is at least $e^{-\xi\delta^2}$ with probability $1-\eta$, but could be smaller (bounded below by $e^{-\xi(\max_i \mathbf{E}_i - \min_i \mathbf{E}_i)^2} \ge e^{-\xi}$) for the violating fraction.
    \item For an inter-community edge, the weight is at most $e^{-\xi\Delta^2}$ with probability $1-\eta$, but could be as large as $1$ for the violating fraction.
\end{itemize}
Using the average-case bounds from Proposition~\ref{prop:noise_suppression}, the effective noise amplification becomes
\begin{equation}
\|\mathbf{n}_i^{\mathrm{FGN}}\| \le \tilde{R} \,\|\mathbf{n}_i^{\mathrm{GCN}}\|,
\end{equation}
where $\tilde{R} = e^{-\xi(\Delta^2-\delta^2)} + C_1\eta = R + C_1\eta$, and $C_1$ depends polynomially on the maximum degree and the weight extrema. Similarly, the signal lower bound becomes $c_{\mathrm{in}} - C_2\eta$ for some $C_2>0$.

Repeating the Chebyshev argument with these modified constants yields
\begin{equation}
\mathbb{P}_{\mathrm{FGN}} \le \frac{(\tilde{R})^2}{(c_{\mathrm{in}} - C_2\eta)^2} \cdot \mathbb{P}_{\mathrm{GCN}} \equiv \rho(\eta) \,\mathbb{P}_{\mathrm{GCN}},
\end{equation}
where $\rho(0) = R^2 < 1$. Because $\rho(\eta)$ is continuous in $\eta$, there exists a non-empty interval $[0, \eta_{\max})$ such that $\rho(\eta) < 1$ for all $\eta \in [0, \eta_{\max})$. Thus, under the approximate separation condition, field-modulated propagation still guarantees a strictly tighter misclassification upper bound, and the advantage degrades gracefully as the field deviates from perfect community alignment. This completes the proof. \hfill $\square$

%-------------------- B.3 Co-evolution Dynamics --------------------
\subsection{Analysis of Co-evolution Dynamics}
\label{app:analysis:coevolution}

We examine the discrete-time dynamical system formed by the alternation
\begin{equation}
\begin{cases}
\mathbf{H}^{(t+1)} = \mathcal{M}_{\phi_\theta^{(t)}}(\mathbf{H}^{(t)}, A), \\[4pt]
\theta^{(t+1)} = \theta^{(t)} - \eta \nabla_\theta \mathcal{L}_{\mathrm{field}}(\phi_\theta^{(t)}, \mathbf{H}^{(t+1)}),
\end{cases}
\label{eq:app_dyn_sys}
\end{equation}
where $\mathcal{M}_{\phi}$ denotes the field-modulated message passing defined in Eq.~\eqref{eq:message_passing}.
We are interested in the convergence of this coupled iteration to a stationary point of the joint objective
$\mathcal{J}(\mathbf{H}, \theta) = \mathcal{L}_{\mathrm{task}}(\mathbf{H}) + \lambda \mathcal{L}_{\mathrm{field}}(\phi_\theta, \mathbf{H})$.

\paragraph{Message passing as a contraction.}
Consider the propagation operator $\mathcal{M}_\phi$ for a fixed field $\phi$.
With bounded, non-negative weights $w_{ij}\in(0,1]$ and typical choices of $f$ and $\psi$ (e.g., ReLU activations and layer-normalized linear transformations), $\mathcal{M}_\phi$ is non-expansive and often contractive in practice.
In particular, if $f$ and $\psi$ are $L_f$- and $L_\psi$-Lipschitz with $L_f L_\psi \max_i \sum_j w_{ij} < 1$, then $\mathcal{M}_\phi$ is a contraction, implying the existence of a unique fixed point $\mathbf{H}^*(\phi)$ that depends continuously on $\phi$.
Even when the contraction constant is only guaranteed to be $\le 1$ (non-expansive), the iteration tends to approach a well-defined limit under mild conditions.

\paragraph{Field optimization.}
Given the current representations $\mathbf{H}^{(t+1)}$, the field parameters $\theta$ are updated via gradient descent on $\mathcal{L}_{\mathrm{field}}$.
Under the smoothness of the field network $g_\theta$ and the coherence/separation regularizers, $\mathcal{L}_{\mathrm{field}}$ is $L_g$-smooth in $\theta$.
With a learning rate $\eta < 2/L_g$, the gradient step guarantees monotonic descent of the field objective, driving $\theta$ toward a stationary point.

\paragraph{Coupled system.}
Viewing the alternation as block coordinate descent on $\mathcal{J}(\mathbf{H},\theta)$, we have:
\begin{itemize}
    \item The $\mathbf{H}$-subproblem (field fixed) seeks a representation that minimizes the task loss under the current field-modulated graph;
    \item The $\theta$-subproblem (representation fixed) refines the field to better match the observed graph structure and the current representation.
\end{itemize}
When both subproblems are strictly convex in their respective variables (or at least satisfy the Kurdyka--\L{}ojasiewicz property), the overall scheme converges to a critical point of $\mathcal{J}$.
At this equilibrium, the field $\phi_{\theta^*}$ coherently explains the graph topology embedded in $\mathbf{H}^*$, and $\mathbf{H}^*$ incorporates the relational prior dictated by the field --- a symmetric partnership that embodies the field-graph co-evolution principle.

\paragraph{Remarks on practical convergence.}
In our experiments, the coupled dynamics stabilize within a few hundred epochs (see Fig.~\ref{fig:entropy_evolution}--\ref{fig:entropy_ratio}), corroborating the theoretical intuition.
The regularizers $\mathcal{L}_{\mathrm{coh}}$, $\mathcal{L}_{\mathrm{sep}}$, and $\mathcal{L}_{\mathrm{sym}}$ further promote well-conditioning by discouraging pathological field configurations.
Thus, while the formal contraction and convexity conditions may not be strictly met in every deep network, FGN's design consistently induces convergent behavior in practice.

%-------------------- B.4 Derivation of Practical Objective --------------------
\subsection{Derivation of the Practical Objective}
\label{app:derivation:objective}

Starting from the evidence lower bound (ELBO) for the marginal likelihood:
\begin{equation}
\log P(\mathcal{G} \mid X) \geq \mathbb{E}_{(\phi,\iota) \sim Q}[\log P(\mathcal{G} \mid \phi \circ \iota, X)] - \text{KL}(Q(\phi,\iota) \| P(\phi,\iota)),
\label{eq:app_elbo}
\end{equation}
we adopt a practical approximation strategy. Rather than maintaining full distributions over $\phi$ and $\iota$, we use point estimates: $Q(\phi,\iota) = \delta(\phi - \phi_\theta) \delta(\iota - \iota_\theta)$, where $\iota_\theta$ is implicitly encoded within the parameterization of $\phi_\theta$. This simplification reduces the ELBO to:
\begin{equation}
\log P(\mathcal{G} \mid \phi_\theta \circ \iota_\theta, X) - \text{KL}(\delta \| P).
\end{equation}

The KL divergence term is replaced by an explicit regularization term $\mathcal{R}(\phi_\theta)$ that encodes desired properties of the learned field. This leads to the regularized maximum likelihood objective:
\begin{equation}
\mathcal{L}(\theta) = -\frac{1}{|\mathcal{E}|}\sum_{(i,j)\in\mathcal{E}} \log P_{ij}(\phi_\theta) + \lambda \mathcal{R}(\phi_\theta),
\label{eq:app_practical_obj}
\end{equation}
where the first term corresponds to negative log-likelihood of observed edges, and the second term enforces field regularity.

The design of $\mathcal{R}(\phi_\theta)$ incorporates several inductive biases. First, we encourage field coherence by promoting similarity between connected nodes, implemented through $\mathcal{L}_{\text{coh}}$ in Eq.~\eqref{eq:coherence_loss}. Second, to prevent trivial solutions where all nodes converge to identical field values, we include a separation loss $\mathcal{L}_{\text{sep}}$ from Eq.~\eqref{eq:separation_loss} that encourages field variance. Third, to operationalize the co-evolutionary principle, we introduce a mutual information loss $\mathcal{L}_{\text{sym}}$ from Eq.~\eqref{eq:symmetric_loss}. These components are combined with task-specific loss $\mathcal{L}_{\text{task}}$ to form the complete FGN objective in Eq.~\eqref{eq:fgn_total_loss}.

%-------------------- B.5 Detailed ELBO Derivation --------------------
\subsection{Detailed Derivation of the Practical Objective}
\label{app:elbo_derivation}

Starting from the evidence lower bound (ELBO) for the marginal likelihood:
\begin{equation}
\log P(\mathcal{G} \mid X) \ge \mathbb{E}_{(\phi,\iota)\sim Q}[\log P(\mathcal{G} \mid \phi \circ \iota, X)] - \mathrm{KL}(Q(\phi,\iota) \| P(\phi,\iota)).
\label{eq:app_elbo_full}
\end{equation}

We adopt a mean-field factorization $Q(\phi,\iota)=Q_\phi(\phi)Q_\iota(\iota)$ and further restrict to point estimates:
\begin{equation}
Q_\phi(\phi) = \delta(\phi - \phi_\theta), \quad Q_\iota(\iota) = \delta(\iota - \iota_\theta),
\end{equation}
where $\phi_\theta$ is a neural network parameterized by $\theta$, and $\iota_\theta$ is implicitly defined through the same network.
Under this Dirac approximation, the expectation term simplifies to $\log P(\mathcal{G} \mid \phi_\theta \circ \iota_\theta, X)$, and the KL divergence becomes:
\begin{equation}
\mathrm{KL}(\delta(\phi-\phi_\theta)\delta(\iota-\iota_\theta) \| P(\phi,\iota)) = -\log P(\phi_\theta,\iota_\theta) + \text{const}.
\end{equation}

Assuming a prior that factorizes as $P(\phi,\iota)=P(\phi)P(\iota)$ and that $P(\iota)$ is uniform, we have $-\log P(\phi_\theta,\iota_\theta) = -\log P(\phi_\theta) + \text{const}$.
A common choice for $P(\phi)$ is a Gaussian process prior that enforces smoothness, which leads to a regularizer of the form $\mathcal{R}(\phi_\theta) = \frac{1}{2}\|\nabla \phi_\theta\|^2$ in the continuous limit.
In our discrete graph setting, we approximate this with graph-based smoothness penalties:
\begin{equation}
-\log P(\phi_\theta) \approx \lambda_1 \sum_{(i,j)\in\mathcal{E}} (\mathbf{E}_i - \mathbf{E}_j)^2 + \lambda_2 \cdot (-\mathrm{Var}(\mathbf{E})) + \lambda_3 \cdot (-\widehat{I}(\mathbf{H};\mathbf{E})),
\end{equation}
where $\mathbf{E}_i = \phi_\theta(x_i)$, and the three terms correspond to coherence, separation, and mutual information regularization, respectively.
Thus the objective in Eq.~\eqref{eq:practical_objective} emerges naturally, with $\lambda$ absorbing the prior scales.

%==================== C. Relation to Standard Graph Attention ====================
\section{Relation to Standard Graph Attention}
\label{app:attention_relation}

FGN's field-modulated message passing bears a superficial resemblance to standard graph attention mechanisms such as GAT~\cite{GAT}, where the influence of neighbor $j$ on node $i$ is computed as $\alpha_{ij} = \operatorname{softmax}_j(e_{ij})$ with $e_{ij}$ being a function of the two node features.
It is therefore important to clarify the conceptual and architectural distinctions that set the proposed method apart.

\begin{enumerate}
    \item \textbf{Global vs.\ local computation.}
    In GAT, attention coefficients $\alpha_{ij}$ are computed independently per layer and per node, based purely on the current-layer representations $\mathbf{h}_i^{(\ell)}$ and $\mathbf{h}_j^{(\ell)}$.
    There is no notion of a persistent, globally coherent quantity that links the scores across different neighborhoods or layers.
    In contrast, FGN's weights $w_{ij}$ are derived from a single scalar field $\mathbf{E} = \sigma(\mathrm{MLP}_{\theta_E}(X))$ that is shared across the entire graph and across all message-passing layers.
    This field is a standalone object with its own state and parameters $\theta_E$, rather than an ephemeral by-product of the node features at a particular layer.

    \item \textbf{Field evolution vs.\ static attention.}
    GAT's attention function is trained purely via the downstream task loss (e.g., cross-entropy), without any explicit objective that encourages structural coherence of the attention patterns themselves.
    In FGN, the field $\mathbf{E}$ is learned through a dedicated, information-theoretic objective $\mathcal{L}_{\mathrm{field}}$ that combines coherence, separation, and mutual information terms~(Eq.~\ref{eq:fgn_total_loss}).
    This objective drives $\mathbf{E}$ to become a geometrically consistent landscape over the graph, a property that is not required of standard attention scores.
    The ablation study in Table~\ref{tab:ablation_across_datasets} provides direct evidence: adding a static field (``+ Static field'') to a GCN backbone improves performance, but the largest gain is obtained when the field is allowed to co-evolve with the representations (``+ Dynamic co-evolution'').
    This confirms that the field's continuous evolution, not merely the use of node-wise gating, is responsible for the observed improvements.

    \item \textbf{Interpretability.}
    Because the field $\mathbf{E}$ is a single, globally defined scalar function, it can be visualized and queried to inspect the ``relational potential'' of any node, independent of the specific neighborhood or layer.
    This yields a degree of interpretability that is absent in standard multi-head attention, where the scores are high-dimensional, layer-specific, and difficult to aggregate into a unified picture.
    As shown in the main experiments, the field values naturally correlate with community structures and structural roles, providing a continuous substrate that explains the discrete connectivity.
\end{enumerate}

In summary, while the mathematical form of $w_{ij} = \exp(-\xi\|\mathbf{E}_i - \mathbf{E}_j\|^2)$ is reminiscent of an attention weight, FGN fundamentally departs from attention-based GNNs by promoting the field to a first-class citizen that participates in its own co-evolutionary loop.
The ablation results substantiate that this reification of the latent field, rather than the specific weighting formula, is the key factor behind the method's performance and robustness.

%==================== D. Experimental Datasets ====================
\section{Experimental Datasets Description}
\label{app:experimental_datasets_description}

This appendix provides detailed information about the benchmark datasets used in all experiments throughout this paper.

Table~\ref{tab:dataset_stats} presents the basic statistical characteristics of the nine datasets used in this study.

\begin{table}[htbp]
\centering
\caption{Statistics of the Experimental Datasets}
\label{tab:dataset_stats}
\setlength{\tabcolsep}{2.5pt}
\small
\begin{tabular}{ccccccc}
\hline
Dataset & Nodes & Edges & Features & Classes & Train & Test \\
\hline
Cora & 2,708 & 5,429 & 1,433 & 7 & 140 & 1,000 \\
CiteSeer & 3,327 & 4,732 & 3,703 & 6 & 120 & 1,000 \\
PubMed & 19,717 & 44,338 & 500 & 3 & 60 & 1,000 \\
Chameleon & 2,277 & 31,421 & 2,325 & 5 & 1,092 & 455 \\
ogb-arxiv & 169,343 & 1,166,243 & 128 & 40 & 90,941 & 48,603 \\
Computers & 13,752 & 245,861 & 767 & 10 & 200 & 1,000 \\
Photo & 7,650 & 119,081 & 745 & 8 & 160 & 1,000 \\
Texas & 183 & 309 & 1,703 & 5 & 87 & 150 \\
Cornell & 183 & 295 & 1,703 & 5 & 87 & 150 \\
\hline
\end{tabular}
\end{table}

\textbf{Cora Dataset}
The Cora dataset is a classic citation network consisting of 2,708 academic papers in the field of machine learning. These papers are divided into 7 categories, for example Case-Based Reasoning and Genetic Algorithms. Each paper is represented as a node, and the 5,429 directed edges between nodes indicate citation relationships. Each paper is described by a 1,433-dimensional binary word vector, indicating the presence or absence of corresponding words from a dictionary.

\textbf{CiteSeer Dataset}
The CiteSeer dataset is a citation network collected from the CiteSeer digital library, containing 3,327 computer science papers categorized into 6 classes. The dataset includes 4,732 citation edges. Node features are 3,703-dimensional binary word vectors, similarly indicating the inclusion of specific words.

\textbf{PubMed Dataset}
The PubMed dataset comprises 19,717 medical research papers in the field of diabetes, categorized into 3 classes. This dataset is relatively large-scale, containing 44,338 citation edges. Unlike the other two datasets, PubMed uses 500-dimensional TF-IDF feature vectors to represent each paper.

\textbf{Chameleon Dataset}
The Chameleon dataset is a webpage network collected from Wikipedia, containing 2,277 webpages categorized into 5 classes, for example Health and Politics. The dataset includes 31,421 hyperlink edges between webpages. Each webpage is represented by a 2,325-dimensional feature vector encoding informative nouns from the page content, capturing topical information for node classification.

\textbf{ogb-arxiv Dataset}
The ogb-arxiv dataset is a large-scale citation network from the Open Graph Benchmark, containing 169,343 computer science papers from arXiv categorized into 40 subject areas. The dataset includes 1,166,243 citation edges, making it a challenging large-scale benchmark. Each paper is represented by a 128-dimensional feature vector obtained by averaging the embeddings of words in the title and abstract.

\textbf{Computers Dataset}
The Computers dataset is a product co-purchase network from Amazon, containing 13,752 computers and computer accessories categorized into 10 product types. The dataset includes 245,861 edges representing frequent co-purchases between products. Each product is represented by a 767-dimensional feature vector derived from product reviews, capturing semantic information about the items.

\textbf{Photo Dataset}
The Photo dataset is another Amazon product co-purchase network, containing 7,650 photography-related products categorized into 8 types. The dataset includes 119,081 co-purchase edges. Similar to the Computers dataset, each product is represented by a 745-dimensional feature vector based on review information, enabling product category classification.

\textbf{Texas Dataset}
The Texas dataset is a small-scale webpage network from the WebKB collection, containing 183 webpages from the University of Texas website categorized into 5 classes. The dataset includes 309 hyperlink edges between webpages. Each webpage is represented by a 1,703-dimensional feature vector encoding the presence of specific words, with this dataset known for its heterophilic structure where connected pages often belong to different categories.

\textbf{Cornell Dataset}
The Cornell dataset is another WebKB webpage network, containing 183 webpages from the website of Cornell University categorized into 5 classes. The dataset includes 295 hyperlink edges. Similar to the Texas dataset, it uses 1,703-dimensional binary word vectors as features and exhibits heterophilic characteristics, making it a challenging benchmark for graph neural networks.

%==================== E. Algorithm Pseudocode ====================
\section{Algorithm Pseudocode}
\label{app:algorithm}

Algorithm \ref{alg:fgn} presents the complete training procedure of the FGN.

\begin{algorithm}[htbp]
\caption{Field-informed Graph Network Training}
\label{alg:fgn}
\begin{algorithmic}[1]
\REQUIRE Graph $\mathcal{G} = (\mathcal{V}, \mathcal{E})$ with adjacency $A$ and node features $X$, Number of layers $L$
\ENSURE Learned node representations $\mathbf{H}$

    \STATE Initialize parameters
    \STATE Compute energy field $\mathbf{E}$ with Eq.~\eqref{eq:energy_field}
    \STATE $\mathbf{H}^{(0)} \gets X$
    \FOR{$\ell = 1$ to $L$}
        \FOR{each node $i \in \mathcal{V}$}
            \STATE Compute weights $w_{ij}$ for $j \in \mathcal{N}(i)$ with Eq.~\eqref{eq:simple_attention}
            \STATE Aggregate messages and update $\mathbf{h}_i^{(\ell)}$ with Eq.~\eqref{eq:message_passing}
        \ENDFOR
    \ENDFOR
    \STATE $\mathbf{H} \gets \mathbf{H}^{(L)}$
    \STATE Compute predictions $\hat{Y} \gets \text{Classifier}(\mathbf{H})$
    \STATE Compute total loss $\mathcal{L}$ using Eq.~\eqref{eq:fgn_total_loss}
    \STATE Compute gradients
    \STATE Update parameters
    \STATE RETURN $\mathbf{H}$
\end{algorithmic}
\end{algorithm}

%==================== F. Experimental Configuration ====================
\section{Experimental Configuration}
\label{app:experiment_details}

\subsection{Computational Environment}
\label{app:computational_environment}

All experiments were conducted on a dedicated computing server equipped with an Intel Xeon Gold 6348 processor operating at 2.60GHz, 251GB of DDR4 RAM, and dual NVIDIA GeForce RTX 4090 GPUs providing a total of 48GB VRAM. The software environment utilized Python 3.8.20 with PyTorch 2.4.1 and CUDA 12.1.

\subsection{Hyperparameter Settings}
\label{app:hyperparameter_settings}

The hyperparameter configurations are summarized in Table \ref{tab:hyperparameter_settings}.

\begin{table}[htbp]
\centering
\caption{Hyperparameter settings for the experimental datasets.}
\label{tab:hyperparameter_settings}
\setlength{\tabcolsep}{2.5pt}
\small
\begin{tabular}{lccccc}
\hline
Dataset & Learning rate & Weight decay  & Epochs & Dropout & Hidden dim  \\
\hline
Cora & 0.01 & $5 \times 10^{-4}$ & 200 & 0.5 & 16 \\
CiteSeer & 0.01 & $5 \times 10^{-4}$ & 200 & 0.5 & 16 \\
PubMed & 0.005 & $5 \times 10^{-5}$ & 500 & 0.5 & 32\\
Chameleon & 0.01 & $5 \times 10^{-5}$ & 400 & 0.5 & 32 \\
ogb-arxiv & 0.01 & $3 \times 10^{-4}$ & 500 & 0.5 & 128 \\
Computers & 0.005 & $5 \times 10^{-5}$ & 500 & 0.5 & 32 \\
Photo & 0.005 & $5 \times 10^{-5}$ & 500 & 0.5 & 32 \\
Texas & 0.01 & $3 \times 10^{-5}$ & 200 & 0.5 & 16 \\
Cornell & 0.01 & $5 \times 10^{-5}$ & 200 & 0.5 & 16 \\
\hline
\end{tabular}
\end{table}

In our experiments, we found FGN to be relatively stable to the choice of regularization weights. The default values with $\alpha=0.5$, $\beta=0.1$, $\gamma=0.05$ worked well across all datasets. Performance variations remained within $\pm$0.5\% when $\alpha$ and $\beta$ varied in the range $[0.1, 1.0]$ and $\gamma$ in $[0.01, 0.1]$ on Cora.

\subsection{Complexity Analysis}
FGN achieves computational complexity $O(L \cdot (|\mathcal{E}|d^2 + NFd_h))$, where $L$ is the number of message passing layers, $|\mathcal{E}|$ is the number of edges, $d$ is the hidden dimension, $N$ is the number of nodes, $F$ is the input feature dimension, and $d_h$ is the field network width. The term $O(|\mathcal{E}|d^2)$ corresponds to the field-modulated message passing operations per layer, and $O(NFd_h)$ corresponds to computing the scalar energy field $\mathbf{E}$ via the MLP in Eq.~\eqref{eq:energy_field}.

Since $L$ is typically a small constant (2--3 layers) and real-world graphs are sparse with $|\mathcal{E}| \sim O(N)$, FGN maintains linear scalability. Moreover, $d_h \ll d$ makes the overhead from the field computation negligible.

%==================== G. Ablation Study Details ====================
\section{Ablation Study Details}
\label{app:ablation_details}

We define each cumulative variant in Table~\ref{tab:ablation_across_datasets} as follows:

\begin{itemize}
    \item \textbf{GCN (baseline)}: Standard two-layer GCN without any field modulation.
    \item \textbf{+ Static field}: A scalar energy field $\mathbf{E} = \sigma(\text{MLP}(X))$ is computed from node features and used to compute weights $w_{ij} = \exp(-\xi \|\mathbf{E}_i - \mathbf{E}_j\|^2)$ as in Eq.~\eqref{eq:simple_attention}, with \textbf{frozen} field parameters.
    \item \textbf{+ Field-modulated diffusion}: The field $\mathbf{E}$ is jointly trained via $\mathcal{L}_{\text{coh}}$ (Eq.~\eqref{eq:coherence_loss}), but without bidirectional co-evolution.
    \item \textbf{+ Dynamic co-evolution}: Full iterative coupling enabled via Eq.~\eqref{eq:field_update}.
    \item \textbf{+ Entropy balance}: Separation loss $\mathcal{L}_{\text{sep}}$ (Eq.~\eqref{eq:separation_loss}) added.
    \item \textbf{+ Hierarchical constraints}: Mutual information loss $\mathcal{L}_{\text{sym}}$ (Eq.~\eqref{eq:symmetric_loss}) added.
\end{itemize}

%==================== H. Robustness Experiment Details ====================
\section{Robustness Experiment Details}
\label{app:robustness_details}

All robustness experiments are conducted on the Cora dataset. Results are averaged over 10 independent random seeds.

\paragraph{Feature dropping.}
For a given drop ratio $p \in \{0, 0.2, 0.4, 0.6, 0.8, 0.9\}$, we randomly mask a fraction $p$ of the feature dimensions to zero independently for each node.

\paragraph{Feature attack (Gaussian noise).}
We add i.i.d.\ Gaussian noise: $\tilde{x}_{i,j} = x_{i,j} + \epsilon_{i,j}$, where $\epsilon_{i,j} \sim \mathcal{N}(0, \sigma^2 \cdot \operatorname{Var}(X_{\cdot,j}))$, with $\sigma \in \{0.1, 0.2, 0.5, 1.0\}$.

\paragraph{Edge removal.}
We randomly delete a fraction $q \in \{0, 0.2, 0.4, 0.6\}$ of existing edges. No connectivity restoration is applied.

\paragraph{Edge addition.}
We randomly add a fraction $r \in \{0, 0.2, 0.4, 0.6\}$ of non-edges from the complement graph.

%==================== I. Visualization Details ====================
\section{Visualization Details}
\label{app:visualization}

This appendix provides the t-SNE visualizations of node embeddings on the Cora dataset for FGN and several baseline methods. All visualizations use the same hyperparameters for fair comparison. Fig.~\ref{fig:visualization} shows that FGN forms tight, well-separated clusters aligned with class labels, while baseline methods exhibit more inter-class mixing. This qualitative evidence supports the effectiveness of our proposed field-graph co-evolution mechanism.

\begin{figure}[ht]
  \centering
  \begin{subfigure}{0.2\linewidth}
    \includegraphics[width=\linewidth]{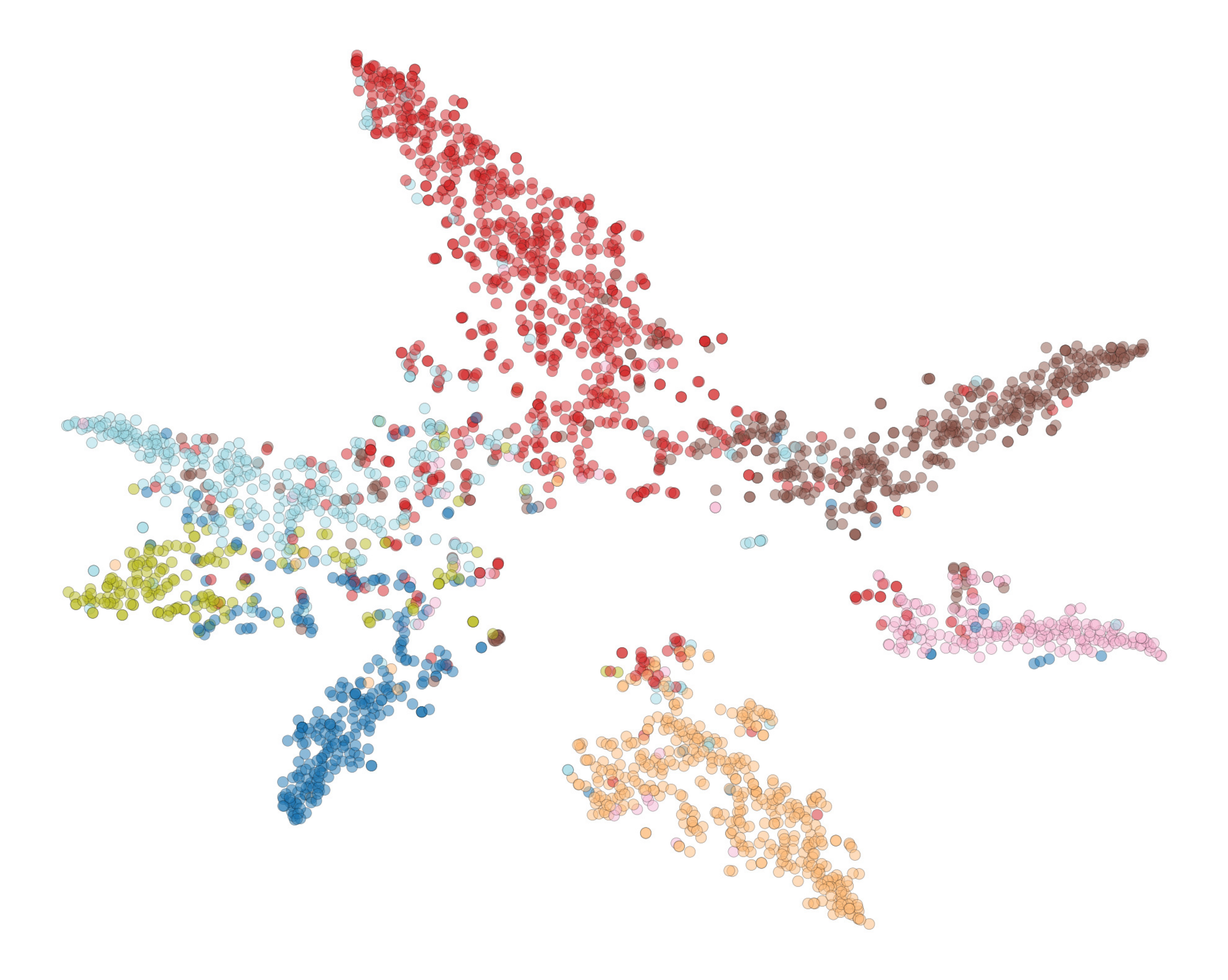}
    \caption{FGN}
    \label{subfig:fga}
  \end{subfigure}
  \begin{subfigure}{0.2\linewidth}
    \includegraphics[width=\linewidth]{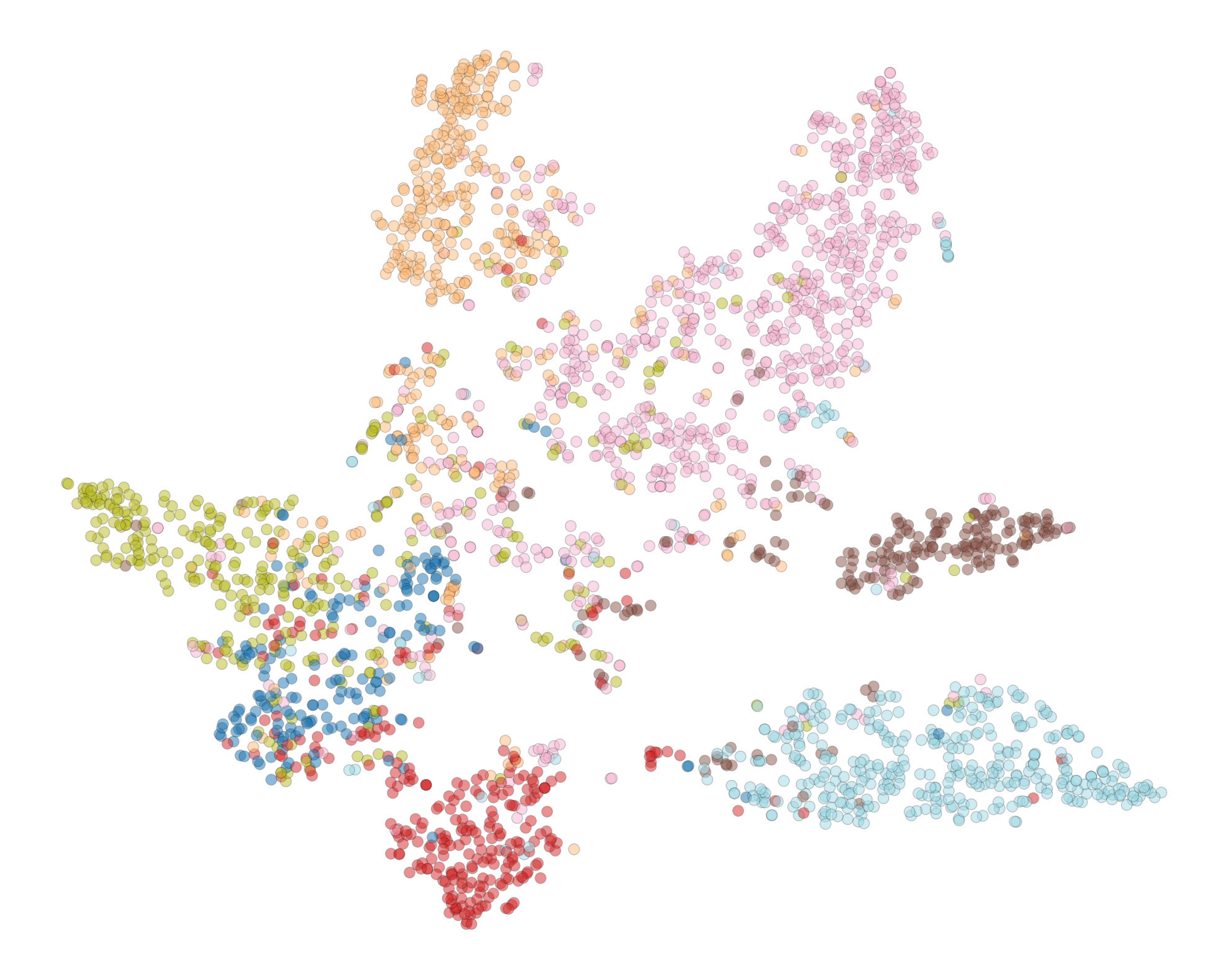}
    \caption{GCN}
    \label{subfig:gcn}
  \end{subfigure}
  \begin{subfigure}{0.2\linewidth}
    \includegraphics[width=\linewidth]{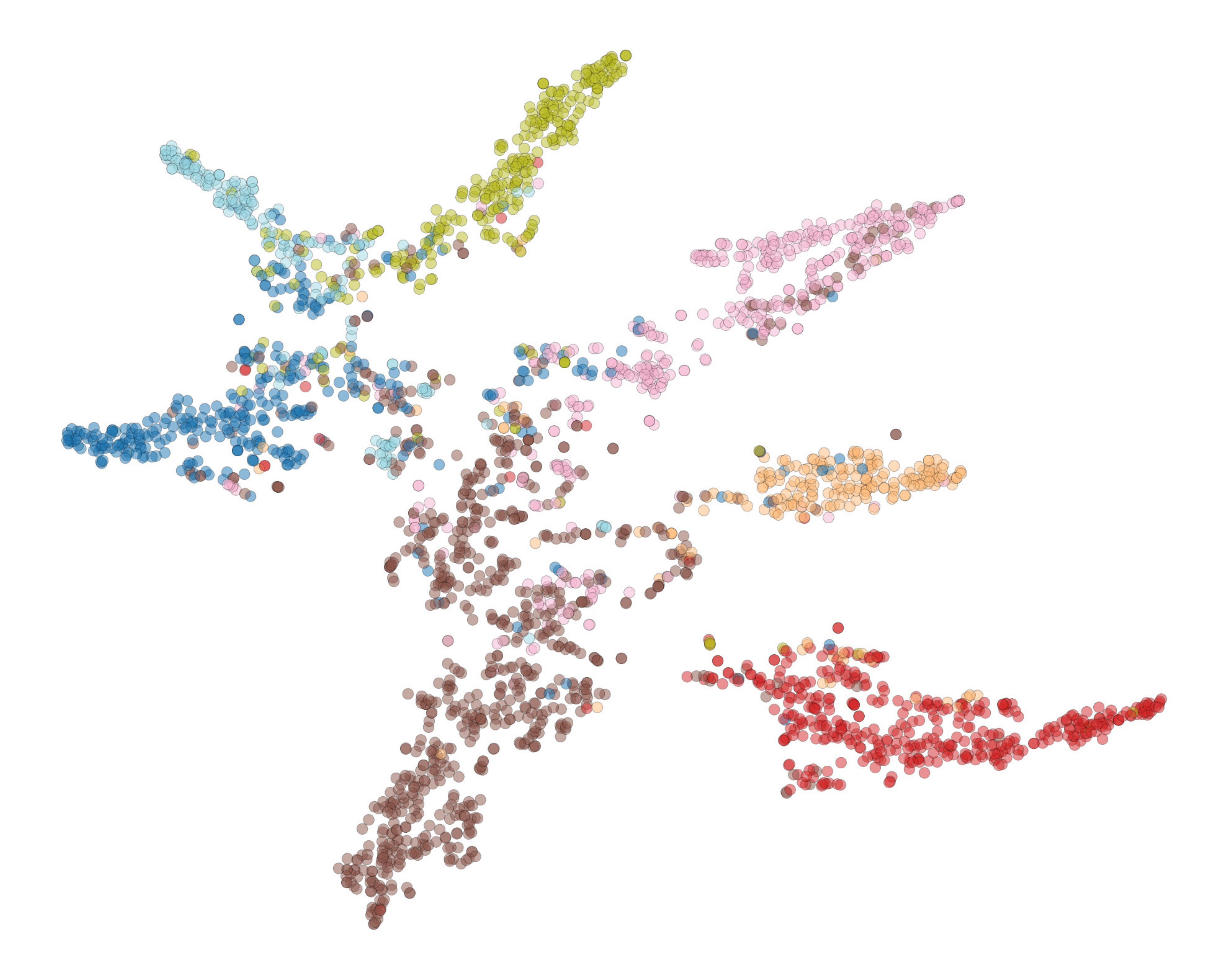}
    \caption{GAT}
    \label{subfig:gat}
  \end{subfigure}
  \begin{subfigure}{0.2\linewidth}
    \includegraphics[width=\linewidth]{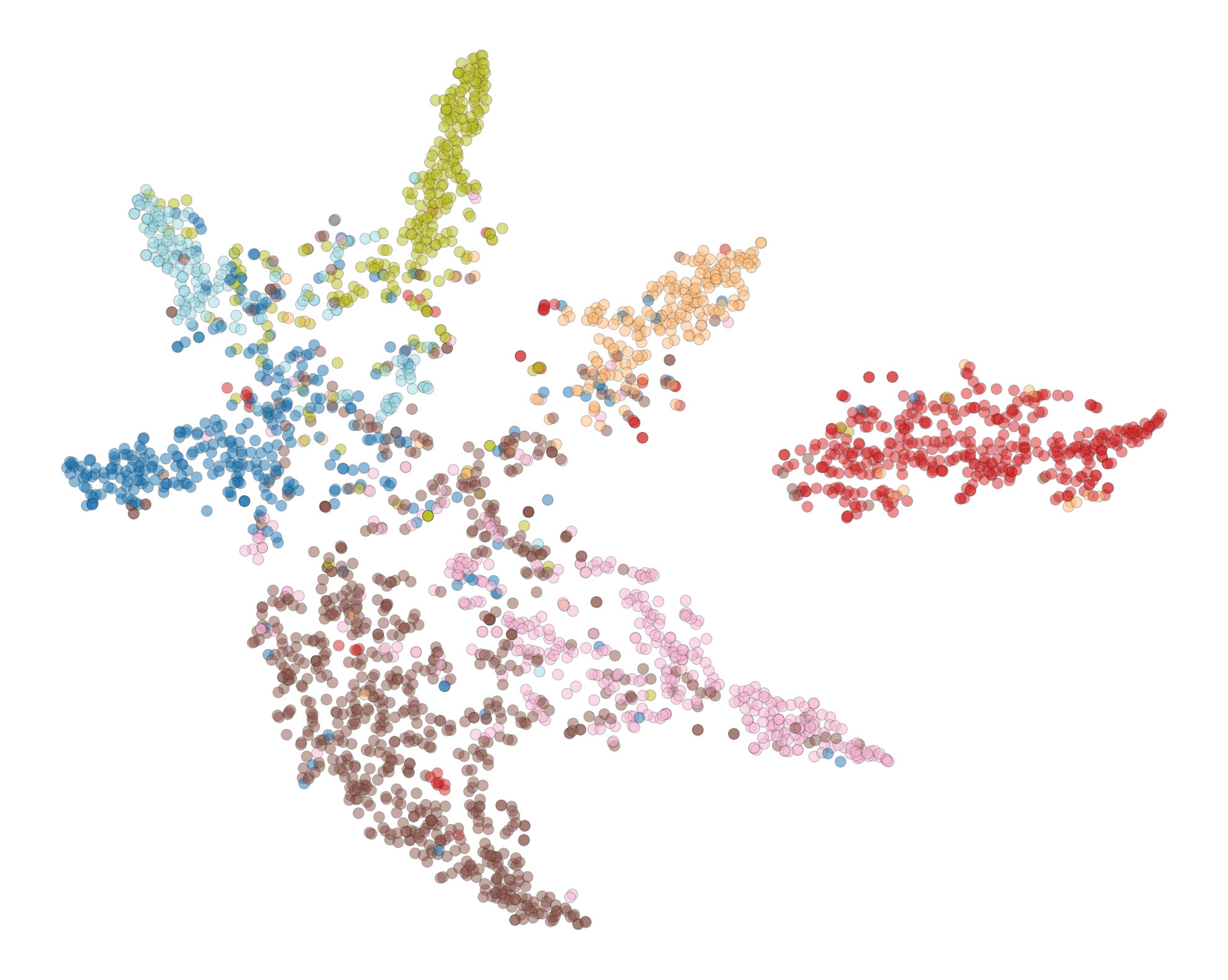}
    \caption{PGBSF}
    \label{subfig:pgbsf}
  \end{subfigure}
  \begin{subfigure}{0.2\linewidth}
    \includegraphics[width=\linewidth]{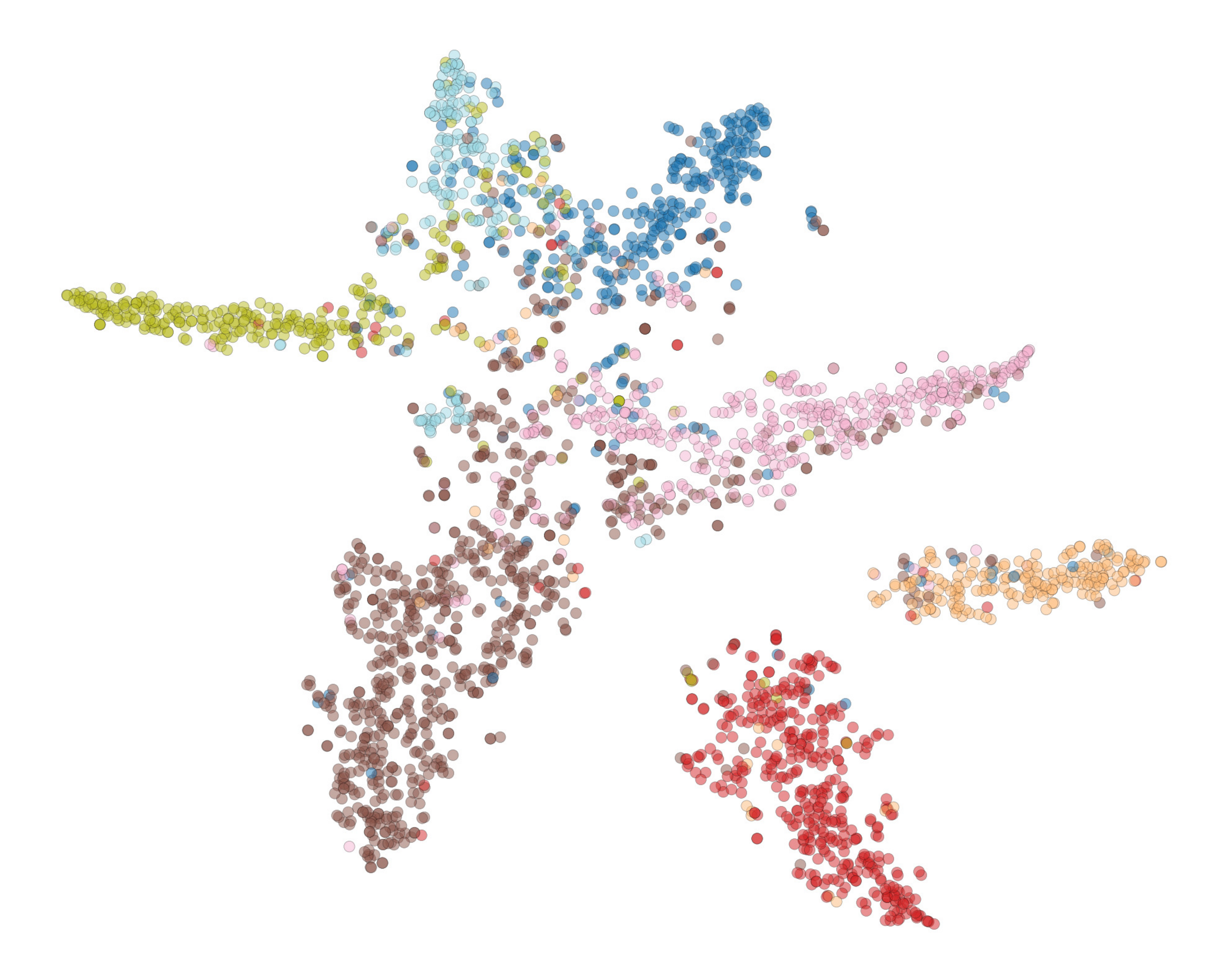}
    \caption{ELU-GCN}
    \label{subfig:elu-gcn}
  \end{subfigure}
  \begin{subfigure}{0.2\linewidth}
    \includegraphics[width=\linewidth]{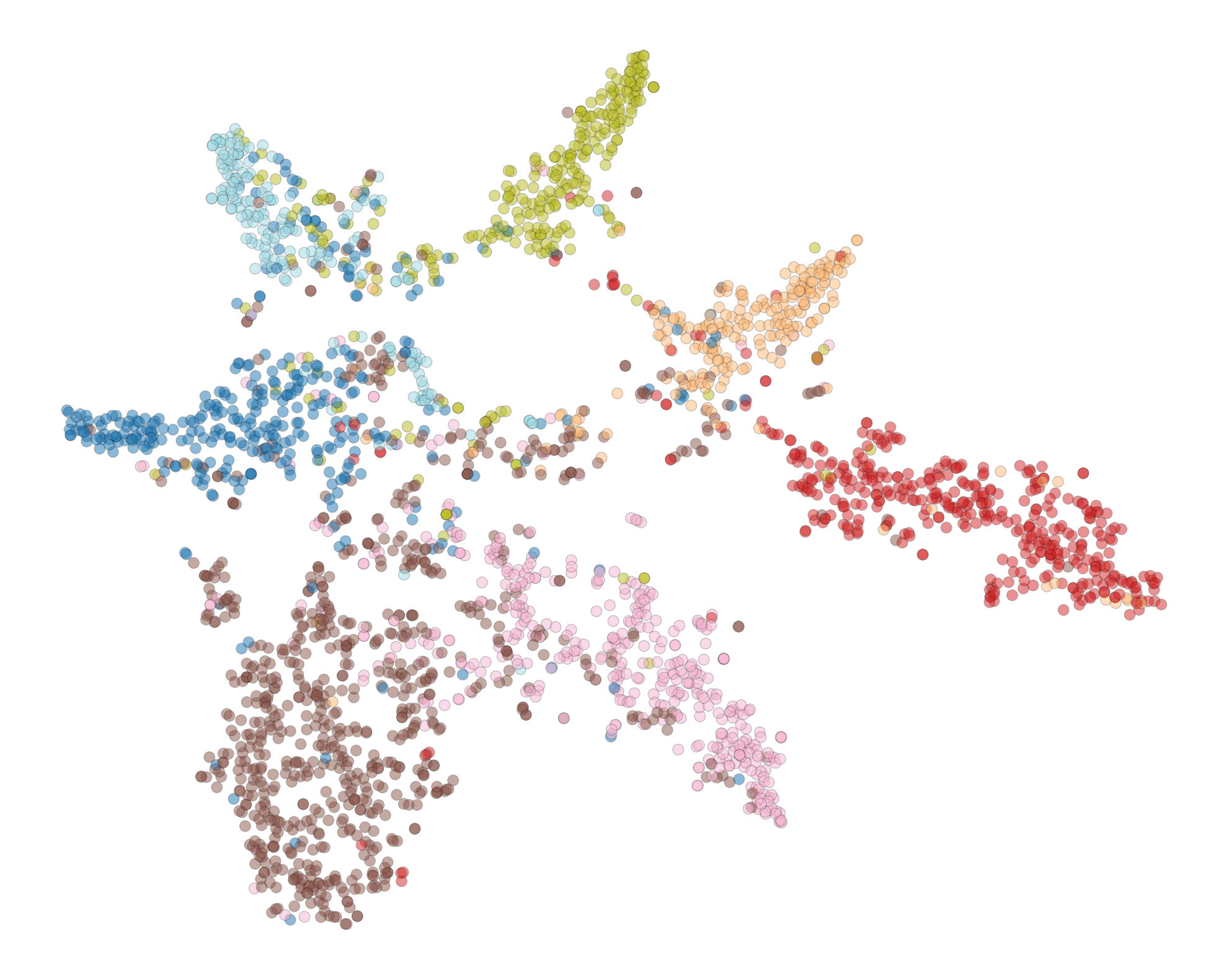}
    \caption{FTCP}
    \label{subfig:ftcp}
  \end{subfigure}
  \begin{subfigure}{0.2\linewidth}
    \includegraphics[width=\linewidth]{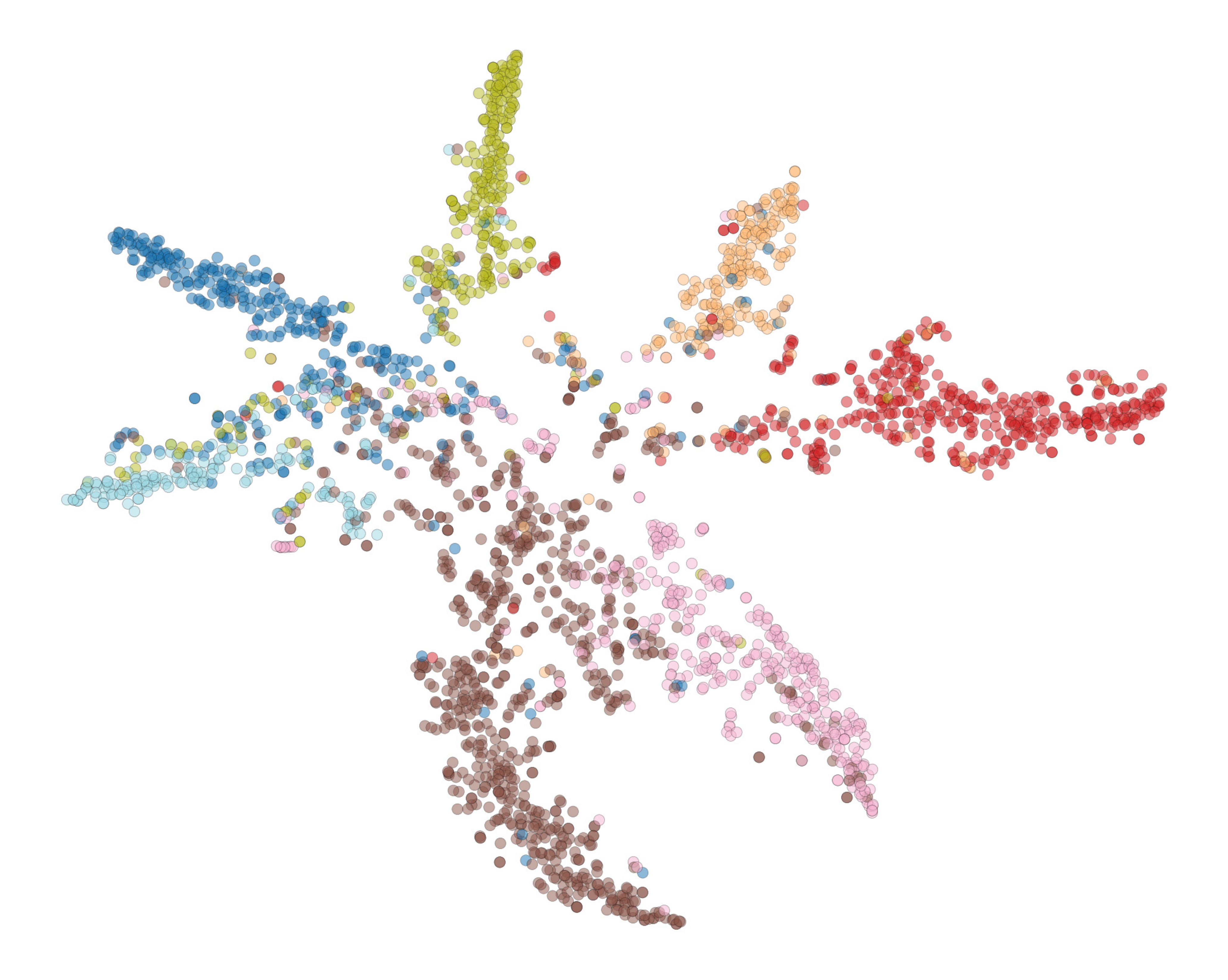}
    \caption{MOGCN}
    \label{subfig:mogcn}
  \end{subfigure}
  \begin{subfigure}{0.2\linewidth}
    \includegraphics[width=\linewidth]{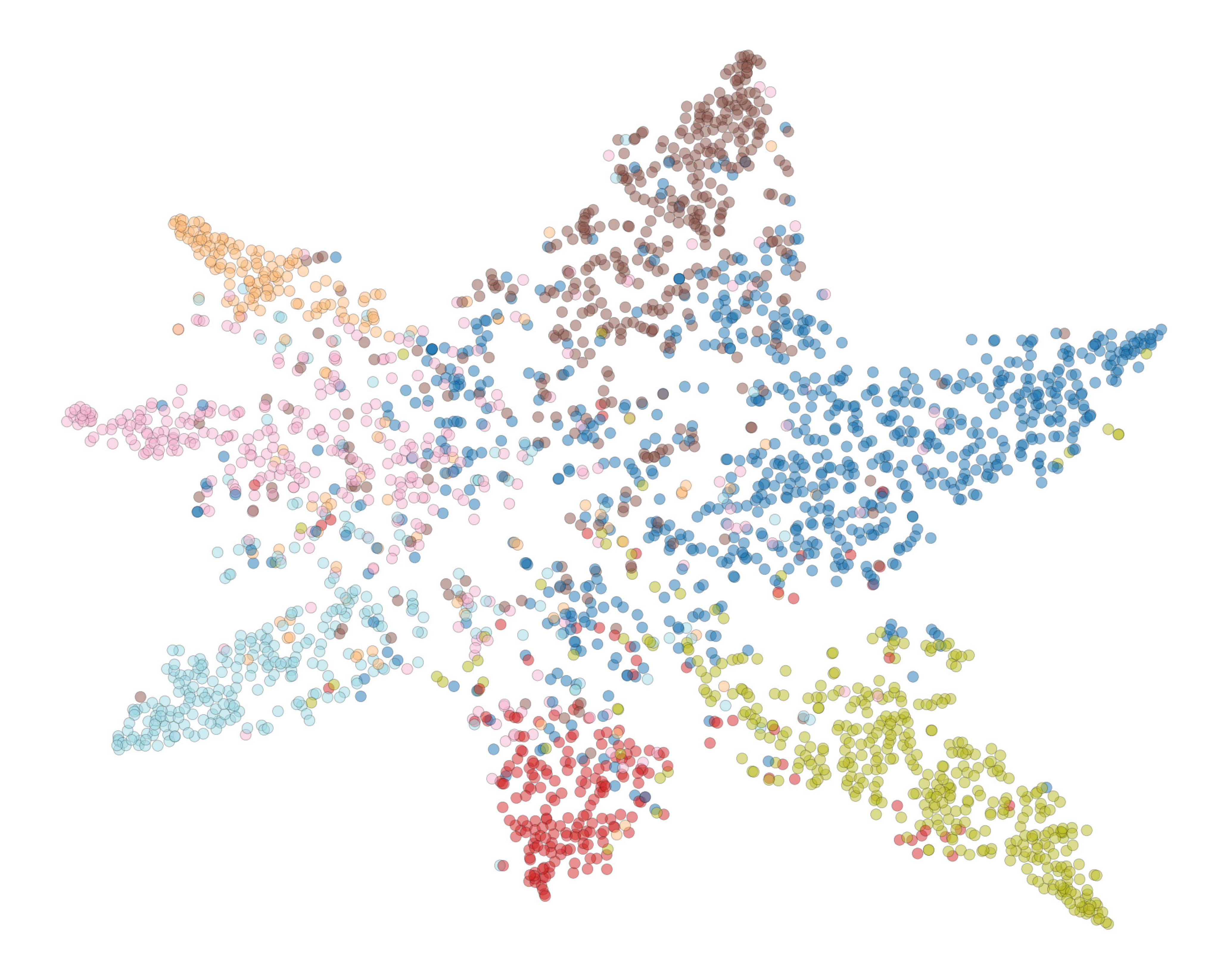}
    \caption{HGRN}
    \label{subfig:hgrn}
  \end{subfigure}
  \caption{t-SNE visualization comparison on Cora dataset.}
  \label{fig:visualization}
\end{figure}

\end{document}